\documentclass{article}

\usepackage{microtype}
\usepackage{graphicx}
\usepackage{subcaption}
\usepackage{booktabs} % for professional tables

\usepackage{hyperref}

\usepackage[preprint]{icml2026}

\usepackage{amsmath}
\usepackage{amssymb}
\usepackage{mathtools}
\usepackage{amsthm}

\usepackage[capitalize,noabbrev]{cleveref}

\theoremstyle{plain}
\newtheorem{theorem}{Theorem}[section]

\theoremstyle{definition}

\theoremstyle{remark}

\usepackage[textsize=tiny]{todonotes}

\icmltitlerunning{Direct Kolen–Pollack Predictive Coding}

\begin{document}

\twocolumn[
  \icmltitle{Accelerated Predictive Coding Networks via \\ Direct Kolen–Pollack Feedback Alignment}

  % It is OKAY to include author information, even for blind submissions: the
  % style file will automatically remove it for you unless you've provided
  % the [accepted] option to the icml2026 package.

  % List of affiliations: The first argument should be a (short) identifier you
  % will use later to specify author affiliations Academic affiliations
  % should list Department, University, City, Region, Country Industry
  % affiliations should list Company, City, Region, Country

  % You can specify symbols, otherwise they are numbered in order. Ideally, you
  % should not use this facility. Affiliations will be numbered in order of
  % appearance and this is the preferred way.
  \icmlsetsymbol{equal}{*}

  \begin{icmlauthorlist}
    \icmlauthor{Davide Casnici}{tud}
    \icmlauthor{Martin Lefebvre}{tud}
    \icmlauthor{Justin Dauwels}{tud,equal}
    \icmlauthor{Charlotte Frenkel}{tud,equal}
  \end{icmlauthorlist}

  \icmlaffiliation{tud}{Department of Microelectronics, Delft University of Technology, Delft, Netherlands}
  % This command generates the footnote for equal contribution
  \icmlsetsymbol{equal}{$\dagger$}

  \icmlcorrespondingauthor{Davide Casnici}{d.casnici@tudelft.nl}
  % You may provide any keywords that you find helpful for describing your
  % paper; these are used to populate the "keywords" metadata in the PDF but
  % will not be shown in the document
  \icmlkeywords{Machine Learning, Predictive Coding, Feedback Alignment, Biologically Inspired Learning}

  \vskip 0.3in
]

% this must go after the closing bracket ] following \twocolumn[ ...

% This command actually creates the footnote in the first column listing the
% affiliations and the copyright notice. The command takes one argument, which
% is text to display at the start of the footnote. The \icmlEqualContribution
% command is standard text for equal contribution. Remove it (just {}) if you
% do not need this facility.

% Use ONE of the following lines. DO NOT remove the command.
% If you have no special notice, KEEP empty braces:
%\printAffiliationsAndNotice{}  % no special notice (required even if empty)
% Or, if applicable, use the standard equal contribution text:
\printAffiliationsAndNotice{\icmlEqualContribution}

\begin{abstract}
Predictive coding (PC) is a biologically inspired algorithm for training neural networks that relies only on local updates, allowing parallel learning across layers. However, practical implementations face two key limitations: error signals must still propagate from the output to early layers through multiple inference-phase steps, and feedback decays exponentially during this process, leading to vanishing updates in early layers. We propose direct Kolen–Pollack predictive coding (DKP-PC), which simultaneously addresses both feedback delay and exponential decay, yielding a more efficient and scalable variant of PC while preserving update locality. Leveraging direct feedback alignment and direct Kolen–Pollack algorithms, DKP-PC introduces learnable feedback connections from the output layer to all hidden layers, establishing a direct pathway for error transmission. This yields an algorithm that reduces the theoretical error propagation time complexity from $\mathcal{O}(L)$, with $L$ being the network depth, to $\mathcal{O}(1)$, removing depth-dependent delay in error signals. Moreover, empirical results demonstrate that DKP-PC achieves performance at least comparable to, and often exceeding, that of standard PC, while offering improved latency and computational performance, supporting its potential for custom hardware-efficient implementations.
\end{abstract}

\section{Introduction}
\label{intro}

Major advances in artificial intelligence, from image recognition \citep{lecun2002gradient, krizhevsky2017imagenet, alom2018history} to image generation \citep{kingma2013auto, parmar2018image, goodfellow2020generative} and natural language processing \citep{hochreiter1997long, vaswani2017attention, beck2024xlstm}, have all been enabled by backpropagation of error (BP), the fundamental algorithm underlying the training of artificial neural networks (ANNs) \citep{linnainmaa1970representation, rumelhart1986learning, WerbosBP}. However, several studies have put into question the plausibility of its direct implementation in biological neural systems \citep{grossberg1987competitive,lillicrap2016random,lillicrap2019backpropagation,whittington2019theories,ellenberger2024backpropagation}. Two primary concerns come from (i) the reliance on a global error signal that must be propagated backward and sequentially through the network hierarchy, thereby blocking parameter updates, and (ii) early layers depending on error signals generated by nodes not directly connected to them. These biological plausibility issues of BP are commonly referred to as \textit{update locking} and \textit{non-locality} \citep{nokland2016direct,frenkel2021learning,ororbia2023brain}. Importantly, they also lead to inefficiencies in hardware implementations, imposing memory and latency overheads \citep{mostafa2018deep,frenkel2023bottom}.

Predictive coding (PC), originally introduced as a model of the visual cortex in the human brain \citep{rao1999predictive, huang2011predictive}, is emerging as a more biologically plausible alternative to BP, alleviating its update-locking and non-locality limitations \citep{millidge2022predictivefuture, salvatori2023brain}. Its framework is grounded in Bayesian inference under the Free Energy Principle \citep{friston2005theory, friston2006free, friston2009predictive}, providing a rigorous mathematical foundation with connections to information theory \citep{elias1955predictive, elias2003predictive} and energy-based models \citep{millidge2022backpropagation, millidge2022theoretical}. Rather than minimizing a global error signal, PC minimizes the network’s \textit{variational free energy} (FE), defined as the sum of layer-wise squared errors between each layer’s activity and its incoming prediction. Unlike BP, where weights are directly updated, PC learning has two phases. In the \textit{inference phase}, neural activity is updated to minimize the FE, and in the \textit{learning phase}, weights are updated based on the optimized neural activity. However, while this framework yields local and layer-wise update rules, the error in PC is still generated at the output and must propagate backward during inference. This error-delay issue undermines PC’s theoretical advantages over BP by requiring a minimum number of inference steps proportional to the network depth, thereby limiting its efficiency and benefits for custom hardware implementations \citep{zahid2023predictive}. Moreover, the delayed error decays exponentially with depth, yielding vanishing updates in early layers \citep{pinchetti2024benchmarking, goemaere2025error}.

To address these limitations, we propose to propagate error information from the output layer to all hidden layers, yielding an instantaneous error term across the hierarchy. We thus build on feedback alignment methods \citep{lillicrap2014random}. Direct feedback alignment (DFA) \citep{nokland2016direct} uses random direct feedback connections to deliver error signals from the output to all hidden layers, avoiding both error delay and decay. However, DFA scales poorly, especially in deep convolutional networks. Direct Kolen-Pollack (DKP) improves DFA by learning the feedback matrices \citep{webster2020learning}, incorporating learning rules inspired by the Kolen-Pollack (KP) algorithm \citep{kolen1994backpropagation, akrout2019deep}, thereby enhancing performance while preserving locality. Figure~\ref{fig:circuits} illustrates these frameworks and shows how our proposed direct KP predictive coding (DKP-PC) integrates primitives of both PC and DKP.

Our contributions are summarized as follows:
\begin{enumerate}
    \item We extend the empirical analysis of \citet{webster2020learning} by providing a mathematical motivation for why DKP achieves closer alignment with BP than standard DFA. This novel view further supports the integration of DKP within the PC framework as an efficient preliminary weight update to generate an instantaneous error term at every layer.
    
    \item We introduce the DKP-PC algorithm, which simultaneously mitigates the feedback error delay and exponential decay limitations of PC while preserving locality. This, for the first time, enables full parallelization in PC networks regardless of batch size. We further discuss how our proposed PC variant achieves a time complexity of $\mathcal{O}(1)$, compared to $\mathcal{O}(L)$ for standard PC, with $L$ being the network depth.

    \item We provide a theoretical and empirical analysis of the synergy between DKP-PC components, demonstrating how the PC neural activity update, under the DKP regime, leads to improved feedback update and, ultimately, to better and more stable gradient alignment with BP compared to standard DKP.
    
    \item We empirically demonstrate that DKP-PC performs on par with, or outperforms, both DKP and PC, benchmarking them across fully-connected and convolutional networks up to VGG-9 on Tiny ImageNet. We further assess DKP-PC’s computational efficiency, showing that it consistently achieves more than a $60\%$ reduction in training time for both VGG-7 and VGG-9, compared to standard PC.
\end{enumerate}

\begin{figure*}[!t]
\begin{center}
\includegraphics[width=0.9\textwidth]{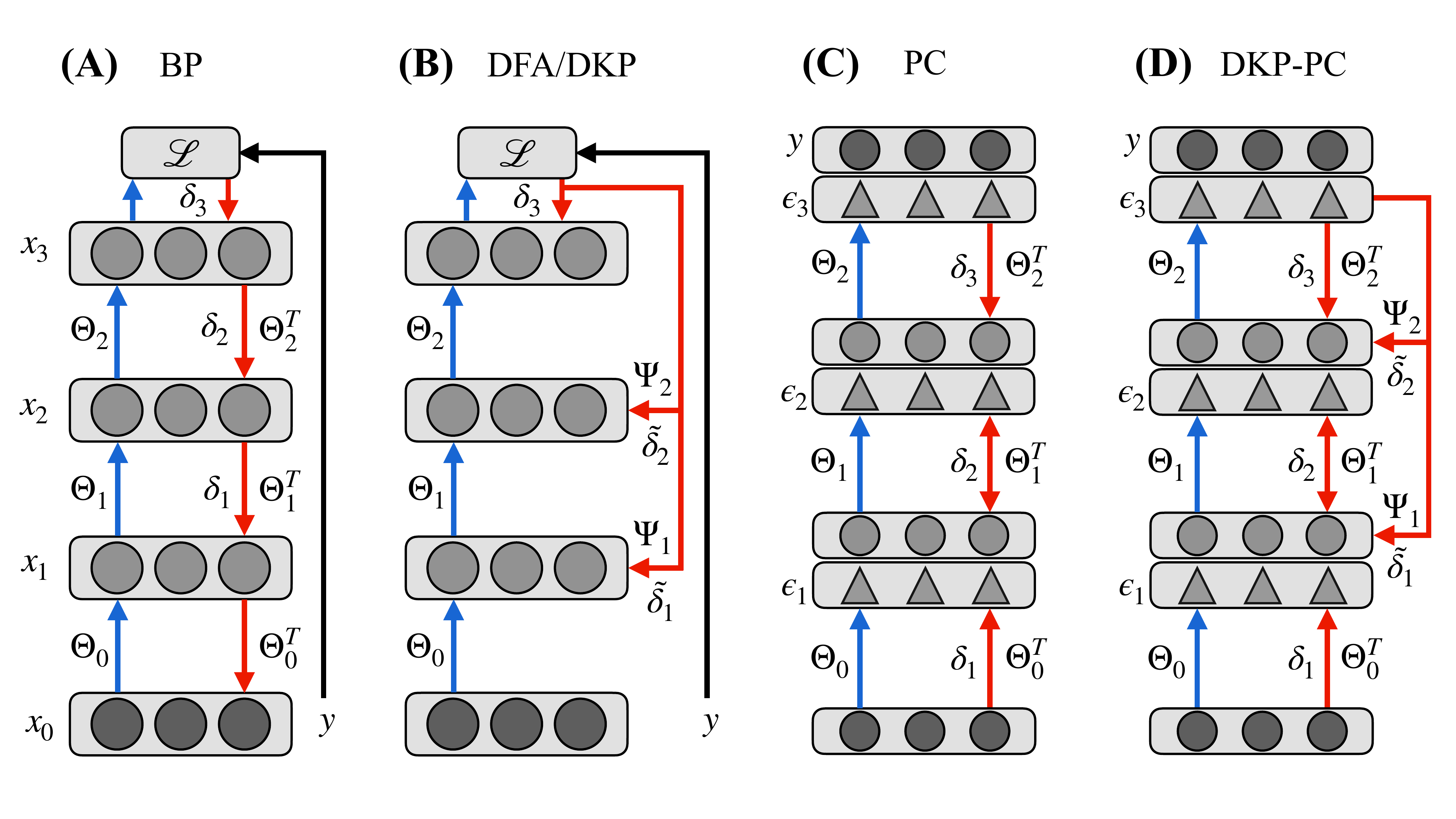}
\end{center}
\caption{DKP-PC embeds DKP within the PC framework to address the error feedback delay and exponential decay issues of PC. Blue arrows represent forward connections, red arrows represent feedback connections. Neural activities are shown as gray circles, with clamped values in darker gray; $x_0$ denotes the input and $y$ the target. $\mathcal{L}$ is the loss function, $\delta_\ell$ are the BP errors, $\tilde{\delta}_\ell$ their approximations, and $\epsilon_\ell$ the PC error neurons, represented as triangles. (A) BP propagates the global error sequentially. (B) DFA and DKP propagate the error directly from the output to each layer. (C) PC minimizes local errors through an inference phase, followed by a learning phase that updates weights. (D) DKP-PC employs direct feedback to deliver instantaneous error signals to every layer, accelerating error propagation during the PC inference phase while preserving the locality of weight updates.}
\label{fig:circuits}
\end{figure*}

\section{Background}
\label{background}

In this section, we review from a mathematical perspective the core concepts of BP, DFA/DKP, and PC, which form the basis of our DKP-PC algorithm.

\subsection{Backpropagation}
BP enables recursive and efficient computation of parameter gradients by applying the chain rule of calculus to propagate error derivatives from the output layer back through the network \citep{linnainmaa1970representation, rumelhart1986learning}. Let us consider a neural network as shown in Figure~\ref{fig:circuits}(A), where each
layer $\ell \in \{0,\dots,L\}$ is associated with an activity error $x_\ell \in \mathbb{R}^{d_\ell}$, where $x_0$ denotes the network's input and $x_L$ the network's output, and $d_\ell$ is the number of neurons in layer $\ell$. The forward pass is defined recursively as
\begin{equation}
    z_\ell = \Theta_{\ell-1}x_{\ell-1}, \quad x_\ell = f(z_\ell), \quad 1 \leq \ell \leq L,
\label{eq:bp_forward}
\end{equation}
where $\Theta_\ell \in \mathbb{R}^{d_{\ell+1}\times d_\ell}$ is the synaptic weight matrix mapping activity from layer $\ell$ to layer $\ell+1$, and $f : \mathbb{R}^{d_\ell} \rightarrow \mathbb{R}^{d_\ell}$ is typically an element-wise non-linear activation function. The output error is then
expressed in terms of the least-squared error (LSE)
\begin{equation}
\mathcal{L} = \frac{1}{2} \| x_L - y \|^2_2,
\label{eq:bp_lse}
\end{equation}
where $y \in \mathbb{R}^{d_L}$ is the target vector. Applying the chain rule, the recursively backpropagated errors $ \frac{\partial \mathcal{L}}{\partial z_{\ell}} = \delta_\ell \in \mathbb{R}^{d_\ell}$ are
\begin{equation}
\delta_\ell =
\begin{cases}
x_L - y & \text{if } \ell = L, \\
f'(z_{\ell}) \odot (\Theta_\ell^\top \delta_{\ell+1}) & \text{otherwise},
\end{cases}
\label{eq:bp_recursive_next}
\end{equation}
where $\odot$ denotes the Hadamard product between the activation derivative $f'(z_\ell)$ and the error term\footnote{Formally, $f'(z_\ell) = \frac{\partial x_\ell}{\partial z_\ell} \in \mathbb{R}^{d_\ell \times d_\ell}$ is the Jacobian matrix of the activation function. However, for element-wise activations, this Jacobian is diagonal with entries $f'(z_\ell)$, so the matrix-vector multiplication simplifies to the Hadamard product.}. The weights are then updated according to
\begin{equation}
\Delta\Theta_\ell = -\alpha \frac{\partial \mathcal{L}}{\partial \Theta_\ell} = -\alpha \big( \delta_{\ell+1}  x_\ell^\top \big),
\label{eq:bp_weights_update_rule}
\end{equation}
where $\alpha \in \left(0,1\right)$ is the weight learning rate.

\subsection{Direct Kolen-Pollack Feedback Alignment}
DFA explicitly addresses the challenge of iteratively backpropagating error information in BP, yielding a local and more biologically plausible algorithm \citep{nokland2016direct}. To achieve this, DFA introduces random matrices $\Psi_\ell \in \mathbb{R}^{d_\ell \times d_L}$ that connect the output layer directly to each hidden layer in the network, as illustrated in Figure~\ref{fig:circuits}(B). These matrices enable the direct propagation of the error signal $\delta_L$ generated at the output layer, avoiding the iterative backward propagation in Eq.~\eqref{eq:bp_recursive_next}. The error is projected directly into each hidden layer according to
\begin{equation}
\tilde{\delta}_\ell = f'(z_\ell) \odot (\Psi_\ell \delta_L),
\label{dfa_error}
\end{equation}
resulting in the following local weight update rule:
\begin{equation}
\Delta \tilde{\Theta}_\ell = -\alpha \big( \tilde{\delta}_{\ell+1}  x_\ell^\top \big).
\label{dfa_update}
\end{equation}
DKP extends DFA by introducing a local learning rule for the feedback matrices $\Psi_\ell$ \citep{webster2020learning}, inspired by the KP algorithm \citep{kolen1994backpropagation, akrout2019deep}. In contrast to DFA where the random matrices $\Psi_\ell$ are kept fixed, DKP updates them according to the following learning rule:
\begin{equation}
\Delta \Psi_\ell = -\alpha \big(x_{\ell} \delta_L^\top \big),
\label{dkp_update_rule}
\end{equation}
where the synaptic plasticity of $\Psi_\ell$ depends only on the connected hidden layer's activity and the output error signal. Note that, as this update depends solely on local information and is decoupled across layers, it can be parallelized without update locking. In Appendix~\ref{app:dkp}, we extend the empirical analysis of \citet{webster2020learning} by providing a mathematical demonstration that, under linear assumptions, the feedback matrices converge to values that incorporate the recursive chain in Eq.~\eqref{eq:bp_recursive_next}, despite the dimensionality mismatch between $\Psi_\ell$ and $\Theta_\ell$. This offers a new theoretical perspective explaining why DKP aligns more closely with BP compared to DFA, showing that it converges to a recursive Moore–Penrose pseudoinverse chain of the forward weights:
\begin{equation}
\Psi_{L-\ell} =
\begin{cases}
    \Theta_{L-\ell}^\top & \text{if } \ell = 1,\\
    \Theta_{L-\ell}^\top \left( \Psi_{L-\ell+1}^\top \right)^{+} & \text{if } 1 < \ell < L.
\end{cases}
\label{eq:dkp_recursive_chain_mpp}
\end{equation}

\subsection{Predictive Coding}
\label{pc}
PC models the brain as a Bayesian hierarchical generative model, in which neural activities encode the causes of sensory stimuli as Gaussian latent variables \citep{rao1999predictive, friston2005theory, friston2009predictive}. Specifically, the activity of layer $\ell$ is represented by a latent variable $x_\ell  \sim \mathcal{N}(\mu_\ell, \Sigma_\ell)$, where $\mu_\ell \in \mathbb{R}^{d_\ell}$ and $\Sigma_\ell \in \mathbb{R}^{d_\ell \times d_\ell}$ denote the mean and covariance of the layer, respectively. As done by several works in literature, we assume the generative model’s covariance to be fixed to the identity matrix $\Sigma_\ell = I$ \citep{pinchetti2022predictive, millidge2022predictivefuture, salvatori2024stablefastfullyautomatic}. The distribution's mean $\mu_\ell$ is parametrized by the previous layer's state through the synaptic weights $\Theta_\ell \in \mathbb{R}^{d_{\ell} \times d_{\ell-1}}$ connecting them, according to the relation $\mu_{\ell} = f(\Theta_{\ell-1} x_{\ell-1})$, where $f:\mathbb{R}^{d_\ell} \to \mathbb{R}^{d_\ell}$ is a non-linear mapping. The joint generative model over all $L+1$ latent-variables layers is
\begin{equation}
\begin{aligned}
&p_{\Theta}(x_0, \dots, x_L) =\\
&\mathcal{N}\big(x_0; \mu_0, \Sigma_0\big) \prod_{\ell=1}^{L} \mathcal{N}\big(x_\ell; \mu_\ell, \Sigma_\ell\big),
\end{aligned}
\label{eq:generative_model_combined}
\end{equation}
where $x_0$ and $x_{L}$ are clamped respectively to the input and target vectors, in classification settings. The exact posterior distribution inference $p(x_0, \dots, x_{L-1} \mid x_{L})$ is generally intractable \citep{friston2005theory}, so PC employs variational inference to approximate the latter with a tractable distribution $q(x_0, \dots, x_L)$, defined as 
\begin{equation}
    q(x_0, \dots, x_{L-1}) = \prod_{\ell=0}^{L-1}q(x_\ell),
\end{equation}
where $q(x_\ell) \in \mathbb{R}^{d_\ell}$ is the variational distribution over the layer $\ell$. Following PC literature for classification tasks, we model the variational posterior as a Dirac delta centered on parameter $\phi_\ell$:
\begin{equation}
q(x_\ell; \phi_\ell) = \delta(x_\ell - \phi_\ell),
\end{equation}
where $\phi_\ell = \mu_\ell$ \citep{bogacz2017tutorial, millidge2021predictive, pinchetti2022predictive, salvatori2024stablefastfullyautomatic}.
This formulation provides a deterministic approximation of the latent variables' mode. Variational inference reduces the problem of maximizing Eq.~\eqref{eq:generative_model_combined} to minimizing the Kullback–Leibler divergence between the variational and true posterior distributions $D_{KL}\left(q\,\|\,p\right)$. Equivalently, this corresponds to minimizing the variational FE, which constitutes an upper bound on \(D_{KL}(q \,\|\, p)\) \citep{friston2009predictive}. Under the assumptions of identity covariance matrices for the generative model and a Dirac delta variational posterior, the FE can be expressed as
\begin{equation}
F = \frac{1}{2} \sum_{\ell=1}^{L} \| \epsilon_\ell \|^2_2,
\label{eq:fe_sum_errors}
\end{equation}
where the prediction errors $\epsilon_\ell \in \mathbb{R}^{d_\ell}$ at layer $\ell$ are defined as
\begin{equation}
\epsilon_\ell = \phi_\ell - f(\Theta_{\ell-1} \phi_{\ell-1}),
\label{eq:pc_errors}
\end{equation}
and are considered as dedicated units, represented by triangles in Figure~\ref{fig:circuits}(C). In prediction tasks, $\phi_0$ is not inferred, as the input layer is clamped to the input vector, and the network is initialized via a forward pass as in Eq.~\eqref{eq:bp_forward}. After this initialization, the output layer $\phi_L$ is clamped to the target vector, so it is also not inferred. Minimizing Eq.~\eqref{eq:fe_sum_errors} results in local updates for both neurons and weights, enabling layer-wise learning. 

PC learning is divided into two phases: the inference and the learning phases. During the inference phase, Eq.~\eqref{eq:fe_sum_errors} is optimized with respect to the variational parameters $\phi_\ell$, updating the neural activities iteratively, whereas during the learning phase, the synaptic parameters $\Theta_\ell$ are updated using the resulting neural configuration to minimize the same objective. This yields the following neural activity update:
\begin{equation}
    \Delta \phi_\ell = -\gamma \frac{\partial F}{\partial \phi_\ell} = \gamma \Big( (f'(\Theta_\ell\phi_\ell) \odot \Theta_\ell)^\top \, \epsilon_{\ell+1} -\,\epsilon_\ell \Big),
\label{eq:neural_act_learn_rule}
\end{equation}
where $\gamma \in \left(0,1\right)$ is the neural activity learning rate. The rule is local, as the activity of $\phi_\ell$ is influenced only by the adjacent error nodes $\epsilon_\ell$ and $\epsilon_{\ell+1}$. After the inference phase is completed, the synaptic connections are updated using the final neural activity configuration $\phi_\ell^*$:
\begin{equation}
\Delta \Theta_\ell = -\alpha \frac{\partial F}{\partial \Theta_\ell} = \alpha \, \big( f'(\Theta_\ell\phi_\ell^*) \odot \, \epsilon_{\ell+1} \, \phi_\ell^{* \top} \big),
\label{eq:weight_update_rule}
\end{equation}
where $\alpha \in \left(0,1\right)$ is the weight learning rate. The weight update is local, as it depends only on the error neurons of the next layer $\epsilon_{\ell+1}$, and on the optimized neural activity of the current layer $\phi_\ell^*$.

\section{Methodology}
\label{methodology}

In this section, we first introduce the issues of error delay and error decay in PC, and subsequently demonstrate how the proposed DKP-PC algorithm provides a unified solution to both of them.

\subsection{Feedback error delay and decay}
\label{errordelaydecay}
Let us consider a forward-initialized PC network with $L+1$ latent variables layers, whose neural dynamics evolve in discrete time steps $t \in \mathbb{N}_0$ according to Eq.~\eqref{eq:neural_act_learn_rule}, explicitly denoting the time dependence of neural activities $\phi_\ell(t)$ and prediction errors $\epsilon_\ell(t)$ during the inference phase.

\emph{Error propagation delay} -- At the initial time $t=0$, the neural activity of each layer corresponds to the prediction from the previous one, resulting in null error values at every layer in the network:
\begin{equation}
\begin{aligned}
&\phi_\ell(0) = f(\Theta_{\ell-1} \phi_{\ell-1}(0)) \\ &\implies \quad\epsilon_\ell(0) = \phi_\ell(0) - f(\Theta_{\ell-1} \phi_{\ell-1}(0)) = 0.
\end{aligned}
\label{eq:zero-error-pc}
\end{equation}
The network is thus at equilibrium, since the FE in Eq.~\eqref{eq:fe_sum_errors} is minimized \citep{whittington2017approximation}. Assuming an incorrect prediction, clamping the target vector $y$ to the output layer at $t=0$ induces a non-zero error at the final layer $\epsilon_L(0) \neq 0$, as shown in Figure~\ref{fig:error_delay_decay}(A). Since each layer updates its activity based on its own and the subsequent layer’s error (see Eq.~\eqref{eq:neural_act_learn_rule}), the error propagates backward at most one layer per time step. Thus, the error takes $\hat{t} = L- \ell$ inference-phase steps to reach layer $\ell$ \citep{zahid2023predictive} (see theorem and proof in Appendix~\ref{app:error_delay}). 

\emph{Error exponential decay} -- The error at optimization time $\hat{t} = L - \ell$, denoted $\epsilon_\ell(\hat{t})$, decays exponentially as it propagates backwards through the network, as shown in Figure~\ref{fig:error_delay_decay}(A). The decay rate is determined by both the learning rate and the distance from the output layer \citep{goemaere2025error}, and the squared $\ell_2$-norm $\|\epsilon_\ell (\hat{t})\|^2_2$ is upper-bounded by a quantity $\propto \gamma^{2(L-\ell)}$ (see theorem and proof in Appendix~\ref{app:error_decay}).

Both issues originate from the fact that the error is generated only at the output layer, with the delay arising from its layer-by-layer propagation through the network and the exponential decay resulting from its progressive reduction by the neural activity learning rate at each iteration.

\begin{figure*}[!t]
\begin{center}
\includegraphics[width=0.9\textwidth]{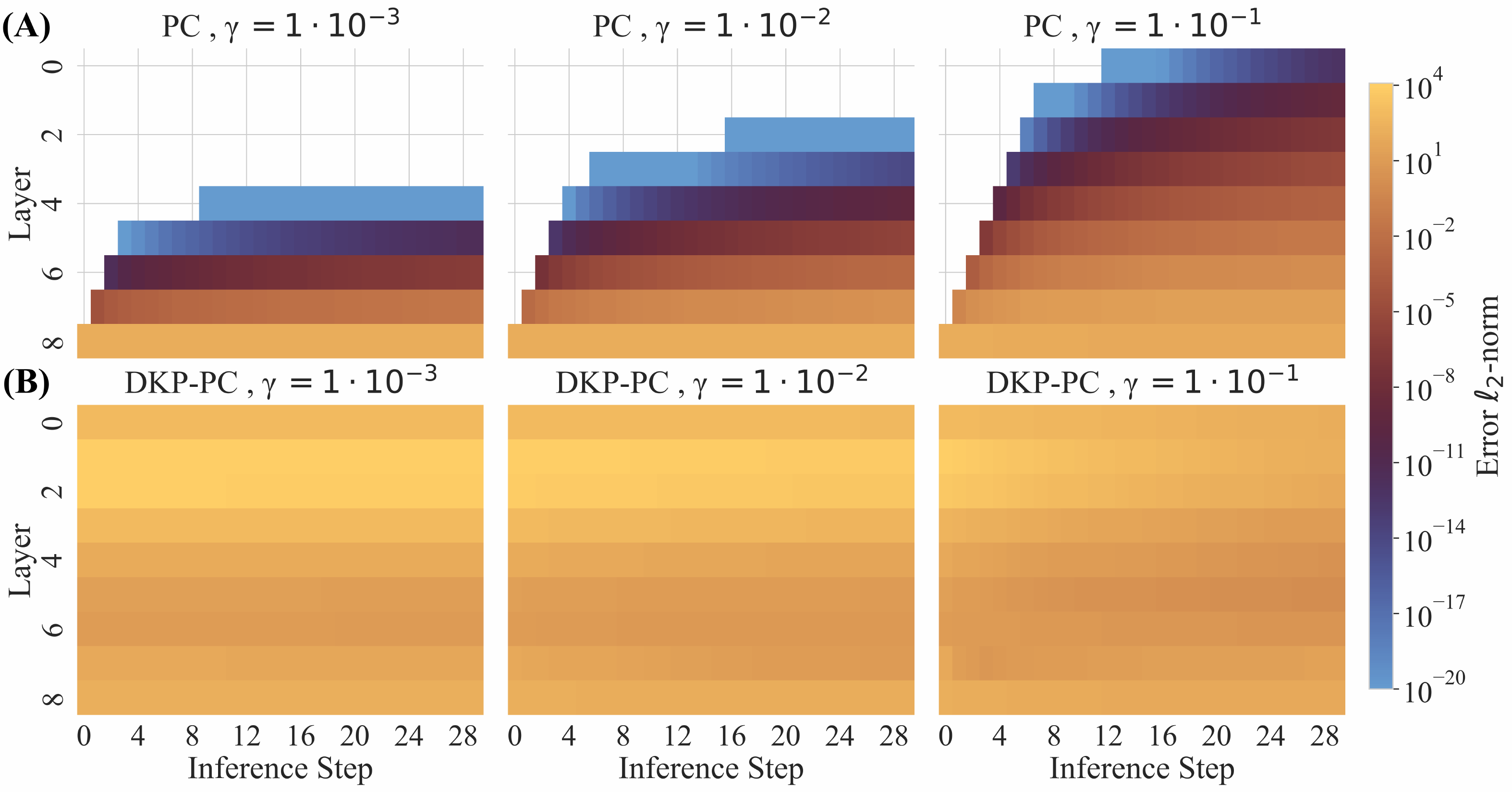}
\end{center}
\caption{Error propagation in PC (A) and DKP-PC (B) during the inference phase of a VGG-9 network trained on a single CIFAR-10 batch, at different magnitudes of the neural activity learning rate $\gamma$. In (A), PC exhibits both an error decay problem, where the error magnitude decreases exponentially with network depth, and an error delay problem, as the error signal flows through the network sequentially, undermining the theoretical parallelism. White colour represents values equal to zero or below the numerical precision. In (B), DKP-PC mitigates both issues, generating a more uniform error signal across all layers at the start of neural activity optimization.}
\label{fig:error_delay_decay}
\end{figure*}

\subsection{Direct Kolen-Pollack Predictive Coding}
We propose to introduce learnable feedback connections from the output layer to each hidden layer $\Psi_\ell \in \mathbb{R}^{d_\ell \times d_L} \, , \, \forall \, \ell \in \{1, \cdots, L-1\}$, to enable a preliminary update of the forward weights (Figure~\ref{fig:circuits}(D)). This approach, inspired by the DKP algorithm, not only induces approximate alignment with BP toward minimizing the output error (Appendix~\ref{app:dkp}), but also ensures that a non-zero error term is generated at every layer at the beginning of the inference phase, since the condition in Eq.~\eqref{eq:zero-error-pc} no longer holds (Figure~\ref{fig:error_delay_decay}(B)).

The resulting algorithm, denoted as DKP-PC, not only solves the error propagation delay and exponential decay issues of PC, but also allows speeding up the inference phase of PC. Indeed, after the error is directly propagated, we empirically show that a single inference-phase step is sufficient to match or even surpass the performance of standard PC, which typically requires a number of steps at least equal to, and often exceeding, the network depth \citep{pinchetti2024benchmarking}. We provide the pseudocode of DKP-PC in Algorithm~\ref{alg:dkp-pc}, and formally outline the main steps below.

\begin{algorithm}[tb]
\caption{Direct Kolen-Pollack Predictive Coding}
\label{alg:dkp-pc}
\begin{algorithmic}%[1]

\FOR{each $(x,y) \in \mathcal{D}$}

    \STATE{0) \textit{Forward initialization} \hfill \textbf{Sequential}} 
    \STATE $\phi_0 \leftarrow x$
    \FOR{$\ell = 1$ {\bfseries to} $L-1$}
        \STATE $\phi_\ell \leftarrow f(\Theta_{\ell-1}\phi_{\ell-1})$ 
    \ENDFOR
    \STATE $\phi_L \leftarrow y$
    \STATE $\epsilon_L \leftarrow y - f(\Theta_{L-1}\phi_{L-1})$

    \STATE{1) \textit{Direct Feedback Alignment update} \hfill \textbf{Parallel}}
    \FOR{$\ell = 0$ {\bfseries to} $L-1$} 
        \STATE $\Theta_\ell \leftarrow \Theta_\ell + \alpha \big( f'(\Theta_\ell \phi_\ell) \odot (\Psi_{\ell+1}\epsilon_L)\phi_\ell^\top \big)$
    \ENDFOR

    \STATE{2) \textit{Inference phase} \hfill \textbf{Parallel}}
    \FOR{$t=0$ {\bfseries to} $T$}
        \FOR{$\ell = 1$ {\bfseries to} $L$, $\ell < L$ for $\phi_\ell$} 
            \STATE $\epsilon_\ell \leftarrow \phi_\ell - f(\Theta_{\ell-1}\phi_{\ell-1})$
            \STATE $\phi_\ell \leftarrow \phi_\ell - \gamma \frac{\partial F}{\partial \phi_\ell}$
        \ENDFOR
    \ENDFOR

    \STATE{3) \textit{Learning phase} \hfill \textbf{Parallel}}
    \FOR{$\ell = 0$ {\bfseries to} $L-1$, $\ell > 0$ for $\Psi_\ell$}
        \STATE $\Theta_\ell \leftarrow \Theta_\ell - \alpha \frac{\partial F}{\partial \Theta_\ell}$
        \STATE $\Psi_\ell \leftarrow \Psi_\ell + \alpha \, \phi_{\ell}\epsilon_L^\top$
    \ENDFOR
\ENDFOR

\label{alg:dkppc-alg}
\end{algorithmic}
\label{alg:dkppc-alg}
\end{algorithm}

\textit{Direct feedback alignment update} --  After the forward initialization of the network, assuming a non-zero error at the output layer $\epsilon_L$, we perturb the equilibrium by taking a first weight update according to Eq.~\eqref{dfa_update}, using PC's last layer error neurons:
\begin{equation}
\begin{aligned}
    \tilde{\Theta}_\ell &= \Theta_\ell + \Delta\tilde{\Theta}_\ell \\
    &= \Theta_\ell -\alpha \big( f'(\Theta_\ell \phi_\ell) \odot (\Psi_{\ell+1} \delta_L) \phi_\ell^\top \big)\\
    &= \Theta_\ell +\alpha \big( f'(\Theta_\ell \phi_\ell) \odot (\Psi_{\ell+1} \epsilon_L) \phi_\ell^\top \big),
\end{aligned}
\label{eq:dkppc-preliminary-update}
\end{equation}
which can be performed in parallel for each layer, as there is no recursive dependence.

\textit{Inference phase} -- After this update, an error term is generated at every layer since the first time step:
\begin{equation}
    \|\epsilon_\ell(0)\|_2^2 = \| \phi_\ell(0) - f(\tilde{\Theta}_{\ell-1} \phi_{\ell-1}(0))\|_2^2 > 0,
\end{equation}
where $\phi_\ell(0) = f(\Theta_{\ell-1} \phi_{\ell-1}(0))$ after forward initialization. This provides a non-zero error term instantaneously in every layer-wise component of the FE in Eq.~\eqref{eq:fe_sum_errors}. Consequently, every layer can independently update its neural activity according to Eq.~\eqref{eq:neural_act_learn_rule}, without performing null updates while waiting for the propagation of the error from the last layer. Thus, the neural activity optimization performed takes the following form:
\begin{equation}
        \Delta \phi_\ell = \gamma \Big( (J_f(\tilde{\Theta}_\ell \phi_\ell) \, \tilde{\Theta}_\ell)^\top \, \epsilon_{\ell+1} -\,\epsilon_\ell \Big).
\label{eq:dkppc_neural_update}
\end{equation}
Furthermore, in contrast to standard PC, the neural activity now incorporates the information injected into the forward weights by the preliminary DKP update. In Appendix~\ref{app:dkp-pc-theoretical}, we show theoretically in Eq.~\eqref{eq:dkp-pc_neural_activity_update} that, under linear assumptions, this corresponds to enforcing alignment and regularization through the single-step neural activity update, which in turn improves the alignment of the forward and feedback weights, as further supported by empirical evidence in Appendix~\ref{app:dkp-pc-empirical}. Lastly, although the update in Eq.~\eqref{eq:dkppc_neural_update} is applied only once to maximize the acceleration of PC networks afforded by DKP-PC, our method is not limited to this setting. As discussed in Appendix~\ref{app:inference-phase}, DKP-PC can leverage the same trade-off as standard PC networks, further minimizing the network's FE to achieve higher classification accuracy by performing multiple inference steps, at the cost of increased training time.

\textit{Learning phase and DKP update} -- After this single local update, both feedforward and feedback weight matrices are updated, which can be fully parallelized. Starting from the feedforward weight matrices $\Theta_\ell$, the update rule is unchanged from the standard PC update in Eq.~\eqref{eq:weight_update_rule}, with the difference of using the neural activity resulting from Eq.~\eqref{eq:dkppc_neural_update}. Feedback weight matrices $\Psi_\ell$ now also incorporate PC's optimized neural activity:
\begin{equation}
\Delta \Psi_\ell = -\alpha \big(\phi_{\ell}^* \delta_L^\top \big) = \alpha \big(\phi_{\ell}^* \epsilon_L^\top \big).
\label{dkppc_feedback_update_rule}
\end{equation}
With DKP-PC, we introduce the first PC variant that fully enables its inherent theoretical parallelizability. Although its sequential formulation across stages may appear to challenge parallelizability and strict biological plausibility, each stage relies exclusively on locally available variables. This property enables full parallelization across layers without suffering from error delay and preserves computational locality throughout the entire learning process, a hallmark of biologically plausible learning. Consequently, the backward time complexity of the network is reduced from $\mathcal{O}(L)$ to $\mathcal{O}(1)$, as it no longer scales with the network depth $L$. Importantly, DKP-PC goes beyond the only other PC variant that addresses the error-delay issue, namely incremental predictive coding (iPC) \citep{salvatori2024stablefastfullyautomatic}. While iPC improves training stability by alternating between neural activity updates (Eq.~\eqref{eq:neural_act_learn_rule}) and feedforward weight updates (Eq.~\eqref{eq:weight_update_rule}), it only partially mitigates error delay and requires full-batch training to do so. In contrast, DKP-PC resolves this limitation independently of the batch size. Furthermore, Appendix~\ref{app:inference-phase} shows that DKP-PC admits an incremental formulation under a multiple inference phase steps regime.

\section{Results}
\label{results}
In this section, we assess DKP-PC against BP, DKP, PC, iPC, and center-nudging PC (CN-PC), in terms of classification performance and training speed. While our method achieves competitive accuracy, it significantly outperforms all other PC variants in terms of training speed, especially for larger networks and datasets.

\textit{Setup} -- We evaluate the scalability of DKP-PC from multi-layer perceptrons (MLPs) to VGG-like convolutional neural networks (CNNs) \citep{simonyan2014very}. For the MLP experiments, a three-layer architecture is evaluated on MNIST and Fashion-MNIST \citep{yann2010mnist, xiao2017fashion}. For the CNN experiments, we assess the performance of VGG-7 and VGG-9 on the CIFAR-10, CIFAR-100, and Tiny ImageNet datasets \citep{krizhevsky2009learning, le2015tiny}. For comparability with prior PC works and to facilitate future benchmarking, we employ the architectures reported by \citet{pinchetti2024benchmarking} in their discriminative mode experiments, and report their performance for PC, iPC, CN-PC and BP. Additional implementation details are reported in Appendix~\ref{app:train_specs}. All implementations are based on the PyTorch framework and are available on GitHub. \footnote{Link omitted to preserve the double-blind review process. The GitHub repository will also contain all hyperparameters for reproducibility.}

\begin{table*}[!ht]
\vspace{1em}
\caption{Test accuracy in \% (mean $\pm$ standard deviation) averaged over 5 random seeds. Results for PC, iPC, CN-PC and BP are taken from \citet{pinchetti2024benchmarking}. The best results among local algorithms are highlighted in bold.}
\vspace{1em}

\centering
\resizebox{\linewidth}{!}{
\begin{tabular}{l|ccccc|c}

\midrule
\% Accuracy & DKP & PC & iPC & CN-PC & \textbf{DKP-PC} & BP \\ 
\midrule

\small \textbf{MLP}  &  &  &  &  &  &  \\ 

MNIST & $98.03^{\pm{0.10}}$ &  $98.26^{\pm{0.04}}$&  $\mathbf{98.45^{\pm{0.09}}}$ & $98.23^{\pm 0.09}$ & $98.02^{\pm{0.09}}$ &  $98.29^{\pm{0.08}}$\\ 

FashionMNIST & $88.86^{\pm{0.13}}$ &  $89.58^{\pm{0.13}}$&  $\mathbf{89.90^{\pm{0.06}}}$ & $89.56^{\pm 0.05}$ & $89.42^{\pm{0.25}}$ &  $89.48^{\pm{0.07}}$\\ 
\midrule

\small \textbf{VGG-7}  &  &  &  &  &  & \\ 

CIFAR-10 & $77.98^{\pm{0.39}}$ & $81.91^{\pm{0.30}}$&  $80.15^{\pm{0.18}}$ & $\mathbf{88.40^{\pm 0.12}}$ & $82.36^{\pm{0.18}}$ & $89.91^{\pm{0.10}}$\\ 

CIFAR-100 (Top-1) & $36.96^{\pm{0.62}}$ & $37.52^{\pm{2.60}}$&  $43.99^{\pm{0.30}}$ & $\mathbf{64.76^{\pm 0.17}}$ & $50.42^{\pm{0.38}}$ & $65.36^{\pm{0.15}}$\\ 

CIFAR-100 (Top-5) & $64.93^{\pm{0.46}}$ & $66.73^{\pm{2.37}}$&  $73.23^{\pm{0.30}}$ & $\mathbf{84.65^{\pm 0.18}}$ & $77.24^{\pm{0.60}}$ & $84.41^{\pm{0.26}}$\\ 

\midrule

\small \textbf{VGG-9} &  &  &  &  &  &  \\ 

CIFAR-10  & $77.12^{\pm{0.33}}$ & $75.33^{\pm{0.25}}$& $79.02^{\pm{0.21}}$ & $\mathbf{87.19^{\pm 0.41}}$ & $81.95^{\pm{0.19}}$ & $90.02^{\pm{0.18}}$\\ 

CIFAR-100 (Top-1) & $46.07^{\pm{1.00}}$ & $39.57^{\pm{0.18}}$& $44.76^{\pm{0.40}}$ & $\mathbf{58.92^{\pm 1.61}}$ & $53.80^{\pm{0.64}}$ & $65.51^{\pm{0.23}}$\\ 

CIFAR-100 (Top-5) & $72.80^{\pm{1.06}}$ &  $66.90^{\pm{0.26}}$& $72.88^{\pm{0.29}}$ & $\mathbf{81.56^{\pm 0.63}}$ & $79.26^{\pm{0.63}}$ & $84.70^{\pm{0.28}}$\\ 

Tiny ImageNet (Top-1) & $29.61^{\pm{0.60}}$ & $21.78^{\pm{0.15}}$& $26.34^{\pm{0.03}}$ & $31.50^{\pm 0.70}$ & $\mathbf{35.04^{\pm{2.64}}}$ & $45.51^{\pm{0.15}}$\\ 

Tiny ImageNet (Top-5) & $53.03^{\pm{0.73}}$ &  $44.43^{\pm{0.09}}$& $50.48^{\pm{0.05}}$ & $54.67^{\pm 0.68}$ & $\mathbf{58.61^{\pm{3.12}}}$ & $65.62^{\pm{17}}$\\ 

\midrule

\end{tabular}
\label{tab:acc_comparison}
}
\end{table*}

\textit{Classification performance} -- The classification results are summarized in Table~\ref{tab:acc_comparison}. For the MLP architecture, all algorithms achieve comparable performance on both MNIST and FMNIST, with local algorithms even surpassing the test accuracy of BP. For VGG-7 and VGG-9 on CIFAR-10 and CIFAR-100, DKP-PC outperforms DKP, PC and iPC, achieving up to $14\%$ higher top-1 accuracy for VGG-9 on CIFAR-100 compared to standard PC, and $9\%$ higher top-1 accuracy than its more stable variant, iPC. However, CN-PC is the best local learning algorithm for all the settings mentioned so far. When moving to the Tiny ImageNet dataset, representing the most complex one evaluated in our experiments, we can see that DKP-PC outperforms all the local learning algorithms, achieving a final test accuracy of $35.04\%$, compared to $31.50\%$ for CN-PC. Furthermore, DKP-PC outperforms vanilla DKP in every setup evaluated, marking a gap of even $13\%$ top-1 accuracy for VGG-7 on CIFAR-100. Interestingly, by leveraging the complementary strengths of DKP and PC, DKP-PC consistently delivers higher accuracy than either method alone and substantially narrows the performance gap with BP, particularly in deeper architectures where local learning typically struggles.

\begin{table*}[!ht]
\vspace{1em}
\caption{Training time per epoch in seconds (mean $\pm$ standard deviation), averaged over 5 trials. All results were obtained via sequential execution of the algorithms.}
\vspace{1em}

\centering
\begin{tabular}{l|cccc|c}

\midrule
Seconds & DKP & PC & iPC & \textbf{DKP-PC} & BP \\ 
\midrule

\small \textbf{MLP}  &  &  &  &  &  \\ 

MNIST & $4.70^{\pm{0.10}}$ &  $4.71^{\pm{0.06}}$ &  $4.79^{\pm{0.09}}$ & $4.74^{\pm{0.09}}$ &  $4.70^{\pm{0.06}}$\\ 

FashionMNIST & $4.62^{\pm{0.06}}$ &  $4.77^{\pm{0.13}}$ &  $5.07^{\pm{0.13}}$ & $4.70^{\pm{0.14}}$ &  $4.62^{\pm{0.07}}$\\ 
\midrule

\small \textbf{VGG-7}  &  &  &  &  &  \\ 

CIFAR-10 & $7.21^{\pm{0.26}}$ & $31.39^{\pm{0.20}}$ &  $54.48^{\pm{0.12}}$ & $11.13^{\pm{0.09}}$ & $7.27^{\pm{0.19}}$\\ 

CIFAR-100 & $7.11^{\pm{0.06}}$ & $31.48^{\pm{0.17}}$ &  $54.69^{\pm{0.22}}$ & $11.67^{\pm{0.04}}$ & $7.15^{\pm{0.04}}$\\ 

\midrule

\small \textbf{VGG-9} &  &  &  &  &  \\ 

CIFAR-10  & $7.21^{\pm{0.09}}$ & $34.10^{\pm{0.15}}$ & $69.34^{\pm{0.10}}$ & $12.06^{\pm{0.03}}$ & $7.09^{\pm{0.08}}$\\ 

CIFAR-100 & $7.17^{\pm{0.06}}$ & $34.18^{\pm{0.04}}$ & $69.73^{\pm{0.09}}$ & $12.53^{\pm{0.01}}$ & $6.95^{\pm{0.05}}$\\ 

Tiny ImageNet & $35.37^{\pm{3.57}}$ & $158.48^{\pm{1.59}}$ & $303.14^{\pm{0.19}}$ & $54.10^{\pm{0.13}}$ & $38.27^{\pm{5.45}}$\\ 

\midrule

\end{tabular}
\label{tab:timing}
\end{table*}

\textit{Training speed} -- Table~\ref{tab:timing} shows the training times for one epoch, in seconds, for BP, DKP, PC, iPC, and DKP-PC, averaged over 5 trials, using the same experimental settings as in Table~\ref{tab:acc_comparison}. CN-PC is omitted since, after nudging the final layer, its training dynamics match those of standard PC (up to the update sign \citep{pinchetti2024benchmarking}). Importantly, DKP-PC requires only a single PC inference step to achieve the accuracies reported in Table~\ref{tab:acc_comparison}. In contrast, PC models typically need a number of inference steps equal to or larger than the network depth to reach the reported accuracy \citep{pinchetti2024benchmarking}. Consequently, in our timing evaluation, we set the number of PC inference steps equal to the network depth. Therefore, the reported PC training times should be interpreted as a lower bound. All measurements were performed using PyTorch on an NVIDIA RTX A6000 GPU where the parallelization opportunities offered by DKP and DKP-PC have not been leveraged, as they would require the use of custom CUDA kernels\footnote{A standard parallelization of DKP-PC in PyTorch introduces significant thread management and synchronization overhead, which cancels out the potential speedup.}. These models were thus executed sequentially, a setting in which BP naturally exploits highly-optimized hardware mapping and execution. Therefore, despite only highlighting speedup through reduced inference-phase steps and not through parallelization, the speedup achieved by DKP-PC compared to other PC algorithms is notable. Indeed, while on very small networks GPU and kernel overheads dominate the runtime, making algorithmic differences negligible, as depth increases the computational gap grows sharply. On the evaluated CNNs, DKP\text{-}PC delivers approximately an average training time reduction of $64\%$ compared to PC and $81\%$ compared to iPC. We elaborate further on the computational trade-offs of DKP-PC in Appendix \ref{app:computational_trade-offs}.

\section{Conclusion and Future Work}
We introduced DKP-PC, the first training algorithm that releases PC’s feedback error delay and exponential decay toward enabling fully parallelized, local learning. We evaluated its classification performance, training speed, and computational efficiency against BP, DKP, PC, and iPC. Our results show that, by accelerating PC with DKP, DKP-PC scales better than the evaluated local-learning algorithms, while exhibiting a substantial improvement in computational efficiency and training time compared to PC and its newer variants iPC and CN-PC. These results indicate that local learning rules can approach BP’s efficiency while narrowing the scalability gap, which is particularly relevant for neuromorphic computing and on-chip learning \citep{millidge2022predictivefuture,frenkel2023bottom}. Future work should focus on custom CUDA kernels to address the thread management and synchronization overheads of the current PyTorch implementation. Indeed, despite already a significant speed-up compared to PC, the training time of DKP-PC will still lag behind that of BP as long as parallelization opportunities are not fully exploited. Furthermore, as feedback matrices introduce memory overhead, sparsity and quantization of feedback weights should be explored, as incentivized by prior work \citep{crafton2019direct, han2019efficient}. Lastly, this novel combination of feedback-alignment methods and PC might pave the way for a new class of algorithms focused on exploiting the synergy between the two frameworks and leveraging their specific dynamics. An interesting research direction is to directly use the feedback information to perturb the neural activity dynamics, without relying on a preliminary weight update step, thereby outlining faster and more efficient local update rules for the neural activity dynamics. Future work could also focus on a tailored integration of DKP with advanced PC variants, such as nudging PC based on equilibrium propagation \citep{scellier2017equilibrium, scellier2023energy, pinchetti2024benchmarking}, combining their dynamics with the DKP learning rules for both forward and feedback weights. This integration could allow the different formulations to complement each other and further reduce the performance gap with BP.

% Acknowledgements should only appear in the accepted version.
%\section*{Acknowledgements}

\section*{Impact Statement}
This paper presents work whose goal is to advance the field of Machine
Learning. There are many potential societal consequences of our work, none
which we feel must be specifically highlighted here.

\bibliography{references}

@phdthesis{linnainmaa1970representation,
  title={The representation of the cumulative rounding error of an algorithm as a Taylor expansion of the local rounding errors (Doctoral dissertation, Master’s Thesis},
  author={Linnainmaa, S},
  year={1970},
  school={MA thesis. University of Helsinki}
}

@article{mostafa2018deep,
  title={Deep supervised learning using local errors},
  author={Mostafa, Hesham and Ramesh, Vishwajith and Cauwenberghs, Gert},
  journal={Frontiers in neuroscience},
  volume={12},
  pages={608},
  year={2018},
  publisher={Frontiers Media SA}
}

@article{ororbia2023brain,
  title={Brain-inspired machine intelligence: A survey of neurobiologically-plausible credit assignment},
  author={Ororbia, Alexander G},
  journal={arXiv preprint arXiv:2312.09257},
  year={2023}
}

@INPROCEEDINGS{WerbosBP,
  author={Werbos},
  booktitle={IEEE 1988 International Conference on Neural Networks}, 
  title={Backpropagation: past and future}, 
  year={1988},
  volume={},
  number={},
  pages={343-353 vol.1},
  doi={10.1109/ICNN.1988.23866}}

@article{bogacz2017tutorial,
  title={A tutorial on the free-energy framework for modelling perception and learning},
  author={Bogacz, Rafal},
  journal={Journal of mathematical psychology},
  volume={76},
  pages={198--211},
  year={2017},
  publisher={Elsevier}
}

@article{millidge2021predictive,
  title={Predictive coding: a theoretical and experimental review},
  author={Millidge, Beren and Seth, Anil and Buckley, Christopher L},
  journal={arXiv preprint arXiv:2107.12979},
  year={2021}
}

@article{saxe2013exact,
  title={Exact solutions to the nonlinear dynamics of learning in deep linear neural networks},
  author={Saxe, Andrew M and McClelland, James L and Ganguli, Surya},
  journal={arXiv preprint arXiv:1312.6120},
  year={2013}
}

@inproceedings{he2015delving,
  title={Delving deep into rectifiers: Surpassing human-level performance on imagenet classification},
  author={He, Kaiming and Zhang, Xiangyu and Ren, Shaoqing and Sun, Jian},
  booktitle={Proceedings of the IEEE international conference on computer vision},
  pages={1026--1034},
  year={2015}
}

@inproceedings{glorot2010understanding,
  title={Understanding the difficulty of training deep feedforward neural networks},
  author={Glorot, Xavier and Bengio, Yoshua},
  booktitle={Proceedings of the thirteenth international conference on artificial intelligence and statistics},
  pages={249--256},
  year={2010},
  organization={JMLR Workshop and Conference Proceedings}
}

@article{dozat2016incorporating,
  title={Incorporating nesterov momentum into adam},
  author={Dozat, Timothy},
  year={2016}
}

@article{adam2014method,
  title={A method for stochastic optimization},
  author={Adam, Kingma DP Ba J and others},
  journal={arXiv preprint arXiv:1412.6980},
  volume={1412},
  number={6},
  year={2014}
}

@article{loshchilov2017decoupled,
  title={Decoupled weight decay regularization},
  author={Loshchilov, Ilya and Hutter, Frank},
  journal={arXiv preprint arXiv:1711.05101},
  year={2017}
}

@article{simonyan2014very,
  title={Very deep convolutional networks for large-scale image recognition},
  author={Simonyan, Karen and Zisserman, Andrew},
  journal={arXiv preprint arXiv:1409.1556},
  year={2014}
}

@article{krizhevsky2009learning,
  title={Learning multiple layers of features from tiny images},
  author={Krizhevsky, Alex and Hinton, Geoffrey and others},
  year={2009},
  publisher={Toronto, ON, Canada}
}

@article{yann2010mnist,
  title={MNIST handwritten digit database},
  author={Yann, LeCun},
  journal={ATT Labs.},
  year={2010}
}

@article{xiao2017fashion,
  title={Fashion-mnist: a novel image dataset for benchmarking machine learning algorithms},
  author={Xiao, Han and Rasul, Kashif and Vollgraf, Roland},
  journal={arXiv preprint arXiv:1708.07747},
  year={2017}
}

@article{pinchetti2022predictive,
  title={Predictive coding beyond gaussian distributions},
  author={Pinchetti, Luca and Salvatori, Tommaso and Yordanov, Yordan and Millidge, Beren and Song, Yuhang and Lukasiewicz, Thomas},
  journal={arXiv preprint arXiv:2211.03481},
  year={2022}
}

@article{rumelhart1986learning,
  title={Learning representations by back-propagating errors},
  author={Rumelhart, David E and Hinton, Geoffrey E and Williams, Ronald J},
  journal={nature},
  volume={323},
  number={6088},
  pages={533--536},
  year={1986},
  publisher={Nature Publishing Group UK London}
}

@article{le2015tiny,
  title={Tiny imagenet visual recognition challenge},
  author={Le, Yann and Yang, Xuan},
  journal={CS 231N},
  volume={7},
  number={7},
  pages={3},
  year={2015}
}

@inproceedings{refinetti2021align,
  title={Align, then memorise: the dynamics of learning with feedback alignment},
  author={Refinetti, Maria and d’Ascoli, St{\'e}phane and Ohana, Ruben and Goldt, Sebastian},
  booktitle={International Conference on Machine Learning},
  pages={8925--8935},
  year={2021},
  organization={PMLR}
}

@article{scellier2017equilibrium,
  title={Equilibrium propagation: Bridging the gap between energy-based models and backpropagation},
  author={Scellier, Benjamin and Bengio, Yoshua},
  journal={Frontiers in computational neuroscience},
  volume={11},
  pages={24},
  year={2017},
  publisher={Frontiers Media SA}
}

@article{han2019efficient,
  title={Efficient convolutional neural network training with direct feedback alignment},
  author={Han, Donghyeon and Yoo, Hoi-jun},
  journal={arXiv preprint arXiv:1901.01986},
  year={2019}
}

@article{scellier2023energy,
  title={Energy-based learning algorithms for analog computing: a comparative study},
  author={Scellier, Benjamin and Ernoult, Maxence and Kendall, Jack and Kumar, Suhas},
  journal={Advances in neural information processing systems},
  volume={36},
  pages={52705--52731},
  year={2023}
}

@article{crafton2019direct,
  title={Direct feedback alignment with sparse connections for local learning},
  author={Crafton, Brian and Parihar, Abhinav and Gebhardt, Evan and Raychowdhury, Arijit},
  journal={Frontiers in neuroscience},
  volume={13},
  pages={525},
  year={2019},
  publisher={Frontiers Media SA}
}

@article{salvatori2024stablefastfullyautomatic,
  title={A Stable, Fast, and Fully Automatic Learning Algorithm for Predictive Coding Networks}, 
  author={Tommaso Salvatori and Yuhang Song and Yordan Yordanov and Beren Millidge and Zhenghua Xu and Lei Sha and Cornelius Emde and Rafal Bogacz and Thomas Lukasiewicz},
  journal={arXiv preprint arXiv:2212.00720},
  year={2024}
}

@article{pinchetti2024benchmarking,
  title={Benchmarking Predictive Coding Networks--Made Simple},
  author={Pinchetti, Luca and Qi, Chang and Lokshyn, Oleh and Olivers, Gaspard and Emde, Cornelius and Tang, Mufeng and M'Charrak, Amine and Frieder, Simon and Menzat, Bayar and Bogacz, Rafal and others},
  journal={arXiv preprint arXiv:2407.01163},
  year={2024}
}

@article{goemaere2025error,
  title={Error Optimization: Overcoming Exponential Signal Decay in Deep Predictive Coding Networks},
  author={Goemaere, C{\'e}dric and Oliviers, Gaspard and Bogacz, Rafal and Demeester, Thomas},
  journal={arXiv preprint arXiv:2505.20137},
  year={2025}
}

@article{akrout2019deep,
  title={Deep learning without weight transport},
  author={Akrout, Mohamed and Wilson, Collin and Humphreys, Peter and Lillicrap, Timothy and Tweed, Douglas B},
  journal={Advances in neural information processing systems},
  volume={32},
  year={2019}
}

@inproceedings{kolen1994backpropagation,
  title={Backpropagation without weight transport},
  author={Kolen, John F and Pollack, Jordan B},
  booktitle={Proceedings of 1994 IEEE International Conference on Neural Networks (ICNN'94)},
  volume={3},
  pages={1375--1380},
  year={1994},
  organization={IEEE}
}

@article{lillicrap2014random,
  title={Random feedback weights support learning in deep neural networks},
  author={Lillicrap, Timothy P and Cownden, Daniel and Tweed, Douglas B and Akerman, Colin J},
  journal={arXiv preprint arXiv:1411.0247},
  year={2014}
}

@article{webster2020learning,
  title={Learning the connections in direct feedback alignment},
  author={Webster, Matthew Bailey and Choi, Jonghyun and others},
  year={2020}
}

@article{zahid2023predictive,
  title={Predictive coding as a neuromorphic alternative to backpropagation: a critical evaluation},
  author={Zahid, Umais and Guo, Qinghai and Fountas, Zafeirios},
  journal={Neural Computation},
  volume={35},
  number={12},
  pages={1881--1909},
  year={2023},
  publisher={MIT Press One Rogers Street, Cambridge, MA 02142-1209, USA journals-info~…}
}

@article{whittington2017approximation,
  title={An approximation of the error backpropagation algorithm in a predictive coding network with local hebbian synaptic plasticity},
  author={Whittington, James CR and Bogacz, Rafal},
  journal={Neural computation},
  volume={29},
  number={5},
  pages={1229--1262},
  year={2017},
  publisher={MIT Press One Rogers Street, Cambridge, MA 02142-1209, USA journals-info~…}
}

@article{millidge2022theoretical,
  title={A theoretical framework for inference and learning in predictive coding networks},
  author={Millidge, Beren and Song, Yuhang and Salvatori, Tommaso and Lukasiewicz, Thomas and Bogacz, Rafal},
  journal={arXiv preprint arXiv:2207.12316},
  year={2022}
}

@article{millidge2022backpropagation,
  title={Backpropagation at the infinitesimal inference limit of energy-based models: Unifying predictive coding, equilibrium propagation, and contrastive hebbian learning},
  author={Millidge, Beren and Song, Yuhang and Salvatori, Tommaso and Lukasiewicz, Thomas and Bogacz, Rafal},
  journal={arXiv preprint arXiv:2206.02629},
  year={2022}
}

@article{elias2003predictive,
  title={Predictive coding--I},
  author={Elias, Peter},
  journal={IRE Transactions on Information Theory},
  volume={1},
  number={1},
  pages={16--24},
  year={2003},
  publisher={IEEE}
}

@article{elias1955predictive,
  title={Predictive coding--II},
  author={Elias, Peter},
  journal={IRE Transactions on Information Theory},
  volume={1},
  number={1},
  pages={24--33},
  year={1955},
  publisher={IEEE}
}

@article{frenkel2023bottom,
  title={Bottom-up and top-down approaches for the design of neuromorphic processing systems: Tradeoffs and synergies between natural and artificial intelligence},
  author={Frenkel, Charlotte and Bol, David and Indiveri, Giacomo},
  journal={Proceedings of the IEEE},
  volume={111},
  number={6},
  pages={623--652},
  year={2023},
  publisher={IEEE}
}

@article{nokland2016direct,
  title={Direct feedback alignment provides learning in deep neural networks},
  author={N{\o}kland, Arild},
  journal={Advances in neural information processing systems},
  volume={29},
  year={2016}
}

@article{lillicrap2016random,
  title={Random synaptic feedback weights support error backpropagation for deep learning},
  author={Lillicrap, Timothy P and Cownden, Daniel and Tweed, Douglas B and Akerman, Colin J},
  journal={Nature communications},
  volume={7},
  number={1},
  pages={13276},
  year={2016},
  publisher={Nature Publishing Group UK London}
}

@article{frenkel2021learning,
  title={Learning without feedback: Fixed random learning signals allow for feedforward training of deep neural networks},
  author={Frenkel, Charlotte and Lefebvre, Martin and Bol, David},
  journal={Frontiers in neuroscience},
  volume={15},
  pages={629892},
  year={2021},
  publisher={Frontiers Media SA}
}

@article{lillicrap2019backpropagation,
  title={Backpropagation through time and the brain},
  author={Lillicrap, Timothy P and Santoro, Adam},
  journal={Current opinion in neurobiology},
  volume={55},
  pages={82--89},
  year={2019},
  publisher={Elsevier}
}

@article{whittington2019theories,
  title={Theories of error back-propagation in the brain},
  author={Whittington, James CR and Bogacz, Rafal},
  journal={Trends in cognitive sciences},
  volume={23},
  number={3},
  pages={235--250},
  year={2019},
  publisher={Elsevier}
}

@article{grossberg1987competitive,
  title={Competitive learning: From interactive activation to adaptive resonance},
  author={Grossberg, Stephen},
  journal={Cognitive science},
  volume={11},
  number={1},
  pages={23--63},
  year={1987},
  publisher={Elsevier}
}

@article{ellenberger2024backpropagation,
  title={Backpropagation through space, time, and the brain},
  author={Ellenberger, Benjamin and Haider, Paul and Jordan, Jakob and Max, Kevin and Jaras, Ismael and Kriener, Laura and Benitez, Federico and Petrovici, Mihai A},
  journal={arXiv preprint arXiv:2403.16933},
  year={2024}
}

@article{friston2009predictive,
  title={Predictive coding under the free-energy principle},
  author={Friston, Karl and Kiebel, Stefan},
  journal={Philosophical transactions of the Royal Society B: Biological sciences},
  volume={364},
  number={1521},
  pages={1211--1221},
  year={2009},
  publisher={The Royal Society London}
}

@article{friston2006free,
  title={A free energy principle for the brain},
  author={Friston, Karl and Kilner, James and Harrison, Lee},
  journal={Journal of physiology-Paris},
  volume={100},
  number={1-3},
  pages={70--87},
  year={2006},
  publisher={Elsevier}
}

@article{friston2005theory,
  title={A theory of cortical responses},
  author={Friston, Karl},
  journal={Philosophical transactions of the Royal Society B: Biological sciences},
  volume={360},
  number={1456},
  pages={815--836},
  year={2005},
  publisher={The Royal Society London}
}

@article{rao1999predictive,
  title={Predictive coding in the visual cortex: a functional interpretation of some extra-classical receptive-field effects},
  author={Rao, Rajesh PN and Ballard, Dana H},
  journal={Nature neuroscience},
  volume={2},
  number={1},
  pages={79--87},
  year={1999},
  publisher={Nature Publishing Group}
}

@article{huang2011predictive,
  title={Predictive coding},
  author={Huang, Yanping and Rao, Rajesh PN},
  journal={Wiley Interdisciplinary Reviews: Cognitive Science},
  volume={2},
  number={5},
  pages={580--593},
  year={2011},
  publisher={Wiley Online Library}
}

@article{salvatori2023brain,
  title={Brain-inspired computational intelligence via predictive coding},
  author={Salvatori, Tommaso and Mali, Ankur and Buckley, Christopher L and Lukasiewicz, Thomas and Rao, Rajesh PN and Friston, Karl and Ororbia, Alexander},
  journal={arXiv preprint arXiv:2308.07870},
  volume={13},
  year={2023},
  publisher={no}
}

@article{millidge2022predictivefuture,
  title={Predictive coding: Towards a future of deep learning beyond backpropagation?},
  author={Millidge, Beren and Salvatori, Tommaso and Song, Yuhang and Bogacz, Rafal and Lukasiewicz, Thomas},
  journal={arXiv preprint arXiv:2202.09467},
  year={2022}
}

@article{lecun2002gradient,
  title={Gradient-based learning applied to document recognition},
  author={LeCun, Yann and Bottou, L{\'e}on and Bengio, Yoshua and Haffner, Patrick},
  journal={Proceedings of the IEEE},
  volume={86},
  number={11},
  pages={2278--2324},
  year={2002},
  publisher={Ieee}
}

@article{hochreiter1997long,
  title={Long short-term memory},
  author={Hochreiter, Sepp and Schmidhuber, J{\"u}rgen},
  journal={Neural computation},
  volume={9},
  number={8},
  pages={1735--1780},
  year={1997},
  publisher={MIT press}
}

@article{vaswani2017attention,
  title={Attention is all you need},
  author={Vaswani, Ashish and Shazeer, Noam and Parmar, Niki and Uszkoreit, Jakob and Jones, Llion and Gomez, Aidan N and Kaiser, {\L}ukasz and Polosukhin, Illia},
  journal={Advances in neural information processing systems},
  volume={30},
  year={2017}
}

@article{alom2018history,
  title={The history began from alexnet: A comprehensive survey on deep learning approaches},
  author={Alom, Md Zahangir and Taha, Tarek M and Yakopcic, Christopher and Westberg, Stefan and Sidike, Paheding and Nasrin, Mst Shamima and Van Esesn, Brian C and Awwal, Abdul A S and Asari, Vijayan K},
  journal={arXiv preprint arXiv:1803.01164},
  year={2018}
}

@article{krizhevsky2017imagenet,
  title={ImageNet classification with deep convolutional neural networks},
  author={Krizhevsky, Alex and Sutskever, Ilya and Hinton, Geoffrey E},
  journal={Communications of the ACM},
  volume={60},
  number={6},
  pages={84--90},
  year={2017},
  publisher={AcM New York, NY, USA}
}

@article{kingma2013auto,
  title={Auto-encoding variational bayes},
  author={Kingma, Diederik P and Welling, Max},
  journal={arXiv preprint arXiv:1312.6114},
  year={2013}
}

@article{goodfellow2020generative,
  title={Generative adversarial networks},
  author={Goodfellow, Ian and Pouget-Abadie, Jean and Mirza, Mehdi and Xu, Bing and Warde-Farley, David and Ozair, Sherjil and Courville, Aaron and Bengio, Yoshua},
  journal={Communications of the ACM},
  volume={63},
  number={11},
  pages={139--144},
  year={2020},
  publisher={ACM New York, NY, USA}
}

@inproceedings{parmar2018image,
  title={Image transformer},
  author={Parmar, Niki and Vaswani, Ashish and Uszkoreit, Jakob and Kaiser, Lukasz and Shazeer, Noam and Ku, Alexander and Tran, Dustin},
  booktitle={International conference on machine learning},
  pages={4055--4064},
  year={2018},
  organization={PMLR}
}

@article{beck2024xlstm,
  title={xlstm: Extended long short-term memory},
  author={Beck, Maximilian and P{\"o}ppel, Korbinian and Spanring, Markus and Auer, Andreas and Prudnikova, Oleksandra and Kopp, Michael and Klambauer, G{\"u}nter and Brandstetter, Johannes and Hochreiter, Sepp},
  journal={Advances in Neural Information Processing Systems},
  volume={37},
  pages={107547--107603},
  year={2024}
}
\bibliographystyle{icml2026}

%%%%%%%%%%%%%%%%%%%%%%%%%%%%%%%%%%%%%%%%%%%%%%%%%%%%%%%%%%%%%%%%%%%%%%%%%%%%%%%
%%%%%%%%%%%%%%%%%%%%%%%%%%%%%%%%%%%%%%%%%%%%%%%%%%%%%%%%%%%%%%%%%%%%%%%%%%%%%%%
% APPENDIX
%%%%%%%%%%%%%%%%%%%%%%%%%%%%%%%%%%%%%%%%%%%%%%%%%%%%%%%%%%%%%%%%%%%%%%%%%%%%%%%
%%%%%%%%%%%%%%%%%%%%%%%%%%%%%%%%%%%%%%%%%%%%%%%%%%%%%%%%%%%%%%%%%%%%%%%%%%%%%%%
\newpage
\appendix
\onecolumn
\section{Appendix}

\subsection{Convergence of Feedback Matrices under the Direct Kolen–Pollack Algorithm}  
\label{app:dkp}
In this appendix, we extend the empirical observations of \citet{webster2020learning} by providing a mathematical argument for why DKP achieves a better alignment with BP than DFA. While their work demonstrates this empirically, it does not provide a formal theoretical justification, instead attributing the behavior to analogies with KP. However, while it has been proven for KP that the feedback matrices $\Psi_\ell$ converge to $\Theta_\ell^\top$ for all layers \citep{kolen1994backpropagation, akrout2019deep}, in DKP this result strictly holds only for the last layer, as all other hidden layers have a dimensionality mismatch between $\Psi_\ell$ and $\Theta_\ell$. Here, we offer a novel theoretical perspective on the work of \citet{webster2020learning}, demonstrating that DKP drives the feedback matrices toward values approximating the chain of transposed forward weights, similar to BP, up to the Moore-Penrose pseudoinverse.

Let us consider a feedforward neural network with $L$ layers, where $d_{\ell}$ denotes the number of neurons at layer $\ell$. The weight matrix $\Theta_\ell \in \mathbb{R}^{d_{\ell+1}\times d_{\ell}}$ maps activations from layer $x_\ell \in \mathbb{R}^{d_\ell}$ to the next layer $x_{\ell+1} \in \mathbb{R}^{d_{\ell+1}}$, according to  
\begin{equation}
x_{\ell+1} = f\!\left(\Theta_\ell x_\ell\right),
\label{eq:forward}
\end{equation}  
where $f(\cdot)$ denotes an arbitrary non-linear activation function. In the following derivations, we assume an identity activation function, so that the derivative of the activation function can be omitted.  

In DFA, error feedback is provided by fixed random matrices $\Psi_\ell \in \mathbb{R}^{d_\ell \times d_L}$ that project the output error term $\delta_L$ directly to each hidden layer $\ell$. These matrices replace the layer-specific error term $\delta_\ell$ used in BP, computed by transporting the error backward from the next layer through the transposed weight matrix $\Theta_{\ell-1}^{\top}$, as described by  
\begin{equation}
\delta_\ell = \Theta_{\ell+1}^\top \delta_{\ell+1}.
\label{eq:bp-error}
\end{equation}  
In DFA, as illustrated in Figure~\ref{fig:circuits}, this layer-specific error term $\delta_\ell$ is instead approximated by  
\begin{equation}
\tilde{\delta_\ell} = \Psi_\ell \delta_L.
\label{eq:dfa-error}
\end{equation}

In contrast, DKP allows the feedback matrices to be updated following the local update rule
\begin{equation}
\Delta \Psi_\ell = x_{\ell}\,\delta_L^\top.
\label{eq:dkp-update}
\end{equation} 
With both forward and feedback weights subject to a decay term, this learning has been empirically shown to enable $\Psi_\ell$ to provide a more BP-aligned update for $\Theta_\ell$ compared to standard DFA \citep{webster2020learning}. We extend their work by demonstrating that the feedback matrices gradually align with the pseudoinverses of the forward weight matrices in a recursive dependency, thereby yielding a closer approximation of BP error propagation compared to DFA.  

Starting from the last layer, the update rule for the weight matrix preceding it, denoted as $\Theta_{L-1} \in \mathbb{R}^{d_L \times d_{L-1}}$, is the same for BP, DFA, and DKP, and is given by
\begin{equation}
\Delta \Theta_{L-1} = -\alpha \left( \delta_L x_{L-1}^\top + \Theta_{L-1} \right),
\label{eq:last-layer-update}
\end{equation}
where $\alpha \in \left(0,1\right)$ is the learning rate, and the second term of the update is the weight decay term. According to Eq.~\eqref{eq:dkp-update} and assuming the same learning rate for the feedback weights, the update rule for the feedback matrix connecting the last layer to the penultimate one is given by
\begin{equation}
\Delta \Psi_{L-1} = -\alpha \left( x_{L-1} \delta_L^\top + \Psi_{L-1} \right).
\label{eq:last-layer-update-dkp}
\end{equation}

In this specific case, $\Theta_{L-1}$ has the shape of $\Psi_{L-1}^{\top}$ and their updates are transposes of each other. As training progresses, both matrices converge to the same value, since the contribution of the initial condition vanishes under the effect of the learning rate $\alpha$ \citep{kolen1994backpropagation, akrout2019deep}.
Indeed, following KP, by defining  
\begin{equation}
\Omega_{L-1}(t+1) = \Theta_{L-1}(t+1) - \Psi_{L-1}^\top(t+1),
\label{eq:def-omega}
\end{equation}
and using the update rules Eq.~\eqref{eq:last-layer-update} and Eq.~\eqref{eq:last-layer-update-dkp}, we obtain
\begin{equation}
\begin{aligned}
\Omega_{L-1}(t+1) &= \left( \Theta_{L-1}(t) + \Delta \Theta_{L-1}(t) \right) 
- \left( \Psi_{L-1}^\top(t) + \Delta \Psi_{L-1}^\top(t) \right) \\
&= \Theta_{L-1}(t) - \Psi_{L-1}^\top(t) - \alpha \left( \delta_L(t) x_{L-1}^\top(t) + \Theta_{L-1}(t) - \delta_L(t) x_{L-1}^\top(t) - \Psi_{L-1}^\top(t) \right) \\
&= \Theta_{L-1}(t) - \Psi_{L-1}^\top(t) - \alpha \left(  \Theta_{L-1}(t) -  \Psi_{L-1}^\top(t) \right) \\
&= \big(1-\alpha\big) \left(  \Theta_{L-1}(t) -  \Psi_{L-1}^\top(t) \right) \\
&= \big(1-\alpha\big) \Omega_{L-1}(t),
\end{aligned}
\label{eq:kp-last-layer}
\end{equation}
We can now further unroll Eq.~\eqref{eq:kp-last-layer} in time, resulting in
\begin{equation}
\begin{aligned}
\Omega_{L-1}(t+1) &= \big(1-\alpha\big)^t\Omega_{L-1}(0)\\
 &= \big(1-\alpha\big)^t \left( \Theta_{L-1}(0) - \Psi_{L-1}^\top(0) \right).
\end{aligned}
\label{eq:kp-final}
\end{equation}
Therefore, $\Omega_{L-1}(t)$ converges to zero as training progresses, with the initial difference between the forward and feedback matrices decaying exponentially due to the learning rate $\alpha$. In other words, we have that 
\begin{equation}
\lim_{t \to \infty} \Omega_{L-1}(t) = 0 
\quad \implies \quad 
\lim_{t \to \infty} \Psi_{L-1}^\top(t) = \Theta_{L-1}(t).
\label{eq:convergence-pk}
\end{equation}
The DKP update rule for $\Theta_{L-2}$ is given by
\begin{equation}
\begin{aligned}
    \Delta \Theta_{L-2} &= -\alpha \left( \tilde{\delta}_{L-1} x_{L-2}^\top + \Theta_{L-2}\right) \\
    &= -\alpha \left( \Psi_{L-1}\delta_{L} x_{L-2}^\top + \Theta_{L-2} \right),\\
\end{aligned}
\label{eq:thetaL-2-final}
\end{equation}
which effectively approximates the BP update for $\Theta_{L-2}$ and ultimately matches it as $t \rightarrow \infty$, as using Eq.~\eqref{eq:convergence-pk} yields
\begin{equation}
\begin{aligned}
    \lim_{t \to \infty} \Delta \Theta_{L-2} = -\alpha \left( \Theta_{L-1}^\top\delta_{L} x_{L-2}^\top + \Theta_{L-2} \right).
\end{aligned}
\label{eq:convergence_thetaL-2}
\end{equation}
Here and throughout the rest of this appendix, we omit the time index since we consider the limit \(t \to \infty\). Therefore, we approximate \(\Psi_{L-1}\) by \(\Theta_{L-1}^\top\), noting that this approximation introduces a small error, since exact equality holds only at the limit.  

Unfortunately, the convergence obtained for $\Psi_{L-1}$ in Eq.~\eqref{eq:convergence-pk} does not directly apply to \(\Psi_{L-2} \in \mathbb{R}^{d_{L-2} \times d_L}\), as its dimensions do not match those of \(\Theta_{L-2} \in \mathbb{R}^{d_{L-1} \times d_{L-2}}\). The update rule for \(\Psi_{L-2}\), given by
\begin{equation}
    \Delta \Psi_{L-2} = -\alpha \big( x_{L-2} \delta_L^\top + \Psi_{L-2} \big),
\label{eq:psi2}
\end{equation}
can be substituted into Eq.~\eqref{eq:convergence_thetaL-2}, leading to
\begin{equation}
\begin{aligned}
    \Delta \Theta_{L-2} &= -\alpha \left( \Theta_{L-1}^\top\delta_{L} x_{L-2}^\top + \Theta_{L-2} \right)\\
    &= -\alpha \left( \Theta_{L-1}^\top \left( -\alpha^{-1} \Delta \Psi_{L-2}^\top - \Psi_{L-2}^\top \right) + \Theta_{L-2} \right) \\
    &= \Theta_{L-1}^\top \Delta \Psi_{L-2}^\top -\alpha \left(\Theta_{L-2} - \Theta_{L-1}^\top \Psi_{L-2}^\top \right).
\end{aligned}
\label{eq:psiL-2-final-p1}
\end{equation}
Here, we neglect the decay terms, since they vanish asymptotically during training, as previously discussed for $\Theta_{L-1}$ and $\Psi_{L-1}$. We can now transpose both sides, and multiply them by $\Theta_{L-1}^\top$, resulting in
\begin{equation}
    \Delta \Psi_{L-2}\Theta_{L-1}\Theta_{L-1}^\top = \Delta \Theta_{L-2}^\top\Theta_{L-1}^\top.
\label{eq:psiL-2-final-transposed}
\end{equation}
This allows us to link the update of $\Psi_{L-2}$ to that of $\Theta_{L-2}$ through
\begin{equation}
\begin{aligned}
    \Delta \Psi_{L-2} &= \Delta \Theta_{L-2}^\top\Theta_{L-1}^\top \left( \Theta_{L-1}\Theta_{L-1}^\top \right)^{-1}\\
    &= \Delta \Theta_{L-2}^\top \Theta_{L-1}^{+},
\end{aligned}
\label{eq:psiL-2-final-final}
\end{equation}
where $\Theta_{L-1}^{+} \in \mathbb{R}^{d_{L-1} \times d_{L}}$ is the Moore-Penrose pseudoinverse of $\Theta_{L-1}$, assuming the latter has full row rank. On the one hand, the BP update of $\Theta_{L-3}$ is given by
\begin{equation}
\begin{aligned}
    \Delta \Theta_{L-3} &= -\alpha \big( \delta_{L-2} x_{L-3}^\top + \Theta_{L-3}\big) \\
    &= -\alpha \big( \Theta_{L-2}^\top\delta_{L-1} x_{L-3}^\top + \Theta_{L-3}\big) \\
    &= -\alpha \big( \Theta_{L-2}^\top\Theta_{L-1}^\top\delta_{L} x_{L-3}^\top + \Theta_{L-3}\big).
\end{aligned}
\end{equation}
On the other hand, by making use of Eq.~\eqref{eq:psiL-2-final-final}, which is valid under the previously mentioned assumptions and approximations, the DKP update can be expressed as
\begin{equation}
\begin{aligned}
    \Delta \Theta_{L-3} &= -\alpha \big( \tilde{\delta}_{L-2} x_{L-3}^\top + \Theta_{L-3}\big) \\
    &= -\alpha \big( \Psi_{L-2}^\top\delta_{L} x_{L-3}^\top + \Theta_{L-3}\big) \\
    &= -\alpha \big( \Theta_{L-2}^\top \Theta_{L-1}^{+} \delta_{L} x_{L-3}^\top + \Theta_{L-3}\big).
\end{aligned}
\label{eq:theta3-dkp}
\end{equation}
Hence, the DKP update of $\Theta_{L-3}$ approximates the BP one, and is exact if $\Theta_{L-1}^+ = \Theta_{L-1}^{\top}$, meaning that $\Theta_{L-1}$ is orthogonal. We now move on to the DKP update of $\Psi_{L-3}$, given by
\begin{equation}
\Delta \Psi_{L-3} = -\alpha \big( x_{L-3} \delta_{L}^\top + \Psi_{L-3}\big).
\label{eq:psi-3}
\end{equation}
By repeating the same procedure as for $\Psi_{L-2}$, we substitute Eq.~\eqref{eq:psi-3} into Eq.~\eqref{eq:theta3-dkp}, resulting in
\begin{equation}
\begin{aligned}
\Delta \Theta_{L-3} &= -\alpha \big( \Theta_{L-2}^\top \Theta_{L-1}^{+} \big( -\alpha^{-1}\Delta \Psi_{L-3} -  \Psi_{L-3} \big)^\top + \Theta_{L-3}\big)\\
&= \Theta_{L-2}^\top \Theta_{L-1}^{+} \Delta \Psi_{L-3}^\top - \alpha \big(\Theta_{L-3} - \Theta_{L-2}^\top \Theta_{L-1}^{+} \Psi_{L-3}^\top \big).
\end{aligned}
\label{eq:psi-3intheta3}
\end{equation}
We now again do not consider the terms decaying with time, as they tend to zero as the training goes on, and focus only on the term the update converges to. As done previously, after dropping the decay term, we transpose both sides, yielding
\begin{equation}
 \Delta\Psi_{L-3} \left( \big(\Theta_{L-1}^{+} \big)^{\top}\Theta_{L-2} \right) = \Delta \Theta_{L-3}^\top,
\label{eq:psi-3intheta3-1}
\end{equation}
and multiply them by $\left( \big(\Theta_{L-1}^{+} \big)^{\top}\Theta_{L-2} \right)^{\top}$, leading to
\begin{equation}
 \Delta\Psi_{L-3} \left( \big(\Theta_{L-1}^{+} \big)^{\top}\Theta_{L-2} \right) \left( \big(\Theta_{L-1}^{+} \big)^{\top}\Theta_{L-2} \right)^{\top} = \Delta \Theta_{L-3}^\top \left( \big(\Theta_{L-1}^{+} \big)^{\top}\Theta_{L-2} \right)^{\top}.
\label{eq:psi-3intheta3-2}
\end{equation}
Lastly, by multiplying both sides by the inverse of the product of matrices on the right of $\Delta\Psi_{L-3}$, we obtain
\begin{equation}
\begin{aligned}
 \Delta\Psi_{L-3} &= \Delta \Theta_{L-3}^\top \left( \big(\Theta_{L-1}^{+} \big)^{\top}\Theta_{L-2} \right)^{\top} \left[\left( \big(\Theta_{L-1}^{+} \big)^{\top}\Theta_{L-2} \right) \left( \big(\Theta_{L-1}^{+} \big)^{\top}\Theta_{L-2} \right)^{\top}\right]^{-1} \\
 &= \Delta \Theta_{L-3}^\top \left( \big(\Theta_{L-1}^{+} \big)^{\top}\Theta_{L-2} \right)^{+},\\
 &= \Delta \Theta_{L-3}^\top \left( \big(\Theta_{L-1}^{\top} \big)^{+}\Theta_{L-2} \right)^{+},
 \end{aligned}
\label{eq:psi-3intheta3-3}
\end{equation}
where again, the feedback matrix includes a chain of forward matrix pseudoinverses. More generally, under the assumption of $t \rightarrow \infty$ and that $\Theta_\ell$ is a full row rank rectangular matrix, we have
\begin{equation}
\Psi_{L-\ell} =
\begin{cases}
    \Theta_{L-\ell}^\top & \text{if } \ell = 1,\\
    \Theta_{L-\ell}^\top \left( \Psi_{L-\ell+1}^\top \right)^{+} & \text{if } 1 < \ell < L,
\end{cases}
\label{eq:feedback-piecewise}
\end{equation}

It should be noted that in practice, the assumptions we made are never perfectly met, since neural network training does not proceed for an infinite number of iterations, meaning that the decay terms we neglected do not completely vanish, and also involves the derivatives of the non-linear activation functions. Nonetheless, our derivation demonstrates how DKP extends DFA by updating the feedback random matrices with terms that also appear in BP's error propagation, providing a clearer understanding of why DKP achieves better alignment and consequently improves performance compared to standard DFA \cite{webster2020learning}.

\subsection{Error propagation delay}
\label{app:error_delay}

In this section, we introduce and provide a formal proof of Theorem~\ref{th:error_delay_theorem}, which quantifies the delay in error propagation in forward-initialized PC networks. Specifically, we show that the feedback error signal reaches a given layer with a delay equal to its distance from the output \citep{zahid2023predictive}.

\begin{theorem}[Error propagation delay]  
Consider a forward-initialized PC network with discrete-time updates. Assuming an incorrect prediction, the neural activity $\phi_\ell$ at layer $\ell$ requires at least $\hat{t} = L - \ell$ inference-phase steps before it deviates from equilibrium and begins to evolve according to 
Eq.~\eqref{eq:neural_act_learn_rule} (proof provided in Appendix~\ref{app:error_delay}).
\label{th:error_delay_theorem}
\end{theorem}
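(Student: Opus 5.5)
The plan is an induction on the inference step $t$, tracking which layers can have nonzero errors and nonzero activity updates. Eq.~\eqref{eq:neural_act_learn_rule} gives $\Delta\phi_\ell(t)$ as a function only of $\epsilon_\ell(t)$ and $\epsilon_{\ell+1}(t)$, and by Eq.~\eqref{eq:pc_errors} each $\epsilon_\ell(t)$ depends only on $\phi_\ell(t)$ and $\phi_{\ell-1}(t)$. So information moves at most one layer per step. The job is to make this precise for the backward direction.

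Inductive claim: for every $t\ge 0$,
\begin{equation*}
\epsilon_\ell(t)=0 \ \text{ for all } \ell \le L-t-1, \qquad \phi_\ell(t)=\phi_\ell(0) \ \text{ for all } \ell \le L-t-1 .
\end{equation*}

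Base case $t=0$. Forward initialization together with Eq.~\eqref{eq:zero-error-pc} gives $\epsilon_\ell(0)=0$ for all $1\le\ell\le L-1$. Clamping $\phi_L=y$ makes only $\epsilon_L(0)\neq 0$, by the incorrect-prediction assumption.

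Inductive step. Suppose the claim holds at time $t$, and take a layer $\ell\le L-t-2$.
\begin{itemize}
\item Both $\ell$ and $\ell+1$ lie in the range $\le L-t-1$, so $\epsilon_\ell(t)=\epsilon_{\ell+1}(t)=0$.
\item Eq.~\eqref{eq:neural_act_learn_rule} is linear in these two errors, so $\Delta\phi_\ell(t)=0$ and hence $\phi_\ell(t+1)=\phi_\ell(t)=\phi_\ell(0)$. The input layer $\phi_0$ is clamped, so it never changes anyway.
\item For $\ell\le L-t-2$, both $\phi_\ell(t+1)$ and $\phi_{\ell-1}(t+1)$ equal their initial values. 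Therefore $\epsilon_\ell(t+1)=\epsilon_\ell(0)=0$.
\end{itemize}
This closes the induction with threshold $L-(t+1)-1$.

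Consequence. $\phi_\ell(t)=\phi_\ell(0)$ and the update is null for all $t< L-\ell$; equivalently, $\Delta\phi_\ell(t)=0$ whenever $\ell+1\le L-t-1$. So at least $\hat t=L-\ell$ steps must elapse before layer $\ell$ can move.

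Layer $\ell=L-1$ needs separate handling, since $\epsilon_L(0)\ne0$ forces an update already at $t=0$. Here I would fix a convention for what "deviates" means: either $\phi_\ell(t)\ne\phi_\ell(0)$ becomes possible at $t=L-\ell$ (one step after the first nonzero update, as the states above show), or the first nonzero $\epsilon_\ell$ appears at $t=L-\ell$. The lower bound holds under either reading.

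The main obstacle I expect is this indexing convention, namely whether step $t$ means the update computed at $t$ or the state after it. I would settle it by stating the lemma on the states $\phi_\ell(t)$ and errors $\epsilon_\ell(t)$ explicitly, as above. The second point to check is that sequential versus simultaneous updates within a step (as in Algorithm~\ref{alg:dkp-pc}) do not break the one-layer-per-step bound. I would assume the synchronous (parallel) schedule, which is the one the statement refers to. The result is phrased as "at least", so no claim is made that the error actually becomes nonzero at $t=L-\ell$; that would need genericity of the Jacobians $f'(\Theta_\ell\phi_\ell)\odot\Theta_\ell$ and is not required.
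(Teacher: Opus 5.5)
Your proposal is correct and follows the same argument as the paper. The update rule in Eq.~\eqref{eq:neural_act_learn_rule} is local: each $\Delta\phi_\ell$ depends only on $\epsilon_\ell,\epsilon_{\ell+1}$, and each $\epsilon_\ell$ depends only on $\phi_\ell,\phi_{\ell-1}$, so the error front advances at most one layer per step. Your explicit invariant ($\epsilon_\ell(t)=0$ and $\phi_\ell(t)=\phi_\ell(0)$ for all $\ell\le L-t-1$) makes rigorous the ``by induction'' step that the paper only sketches through the cases $t=1,2$. Your caveat about the update schedule is also well placed. The paper implicitly assumes synchronous (or forward-ordered, as in Algorithm~\ref{alg:dkp-pc}) updates. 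The bound would fail for a backward sequential sweep that recomputes each $\epsilon_\ell$ before updating $\phi_{\ell-1}$.
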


\begin{proof}
Let us consider a forward-initialized PC network with $L+1$ layers, where the neural activity evolves in discrete time steps $t \in \mathbb{N}_0$, and each layer is initialized as $\phi_\ell(0) = f(\Theta_{\ell-1}\phi_{\ell-1}(0))$. The error neurons at layer $\ell$ are defined as $\epsilon_\ell(0) = \phi_\ell(0) - f(\Theta_{\ell-1}\phi_{\ell-1}(0))$. By construction, after forward initialization, all error neurons vanish and the network's dynamics is at equilibrium:
\begin{equation}
\begin{aligned}
    \|\epsilon_\ell(0)\|_2^2 &= \|\phi_\ell(0) - f(\Theta_{\ell-1}\phi_{\ell-1}(0))\|_2^2 \\
    &= \|\phi_\ell(0) - \phi_\ell(0)\|_2^2 \\
    &= 0 \quad \text{for} \quad 0 < \ell \leq L.
\end{aligned}
\end{equation}

At the beginning of the inference phase, where the network's neural activity evolves to minimize Eq.~\eqref{eq:fe_sum_errors}, we clamp the target vector $y$ to the output layer $\phi_L$. Assuming an incorrect prediction, i.e., $y - f(\Theta_{L-1}\phi_{L-1}(0)) \neq 0$, the prediction error satisfies
\begin{equation}
\|\epsilon_L(0)\|_2^2 = \|y - f(\Theta_{L-1}\phi_{L-1}(0))\|_2^2 \geq 0.
\end{equation}

Consequently, according to Eq.~\eqref{eq:neural_act_learn_rule}, the activity at layer $L-1$ has a non-zero update at $t=1$:
\begin{equation}
\begin{aligned}
    \phi_{L-1}(1) &= \phi_{L-1}(0) + \Delta \phi_\ell(0)\\
    &= \phi_{L-1}(0) - \gamma \frac{\partial F}{\partial \phi_{L-1}}(0)\\ 
    &= \phi_{L-1}(0) - \gamma \Big( J_{\phi_{L-1}}(0)^\top  \epsilon_L - \epsilon_{L-1} \Big) \\
    &= \phi_{L-1}(0) - \gamma \Big( J_{\phi_{L-1}}(0)^\top  \epsilon_L \Big),
\end{aligned}
\end{equation}
where $J_{\phi_{\ell}}(0) = \frac{\partial f(\Theta_{\ell} \phi_{\ell})}{\partial \phi_{\ell}}(0) \in \mathbb{R}^{d_{\ell+1} \times d_\ell }$ is the Jacobian matrix of prediction by layer $\ell$ with respect to its neural activity. For all preceding layers, we have that $\Delta \phi_\ell(0) = 0$, since both $\epsilon_\ell$ and $\epsilon_{\ell+1}$ are null.

After $\phi_{L-1}$ has been updated at time $t=1$, the corresponding error becomes non-zero:
\begin{equation}
\begin{aligned}
    \|\epsilon_{L-1}(1)\|_2^2 &= \|\phi_{L-1}(1) - f(\Theta_{L-2}, \phi_{L-2}(1))\|_2^2\\
    &= \|\phi_{L-1}(1) - f(\Theta_{L-2}, \phi_{L-2}(0))\|_2^2\\
    &= \|\phi_{L-1}(1) - \phi_{L-1}(0)\|_2^2\\
    &= \|\Delta \phi_{L-1}(1)\|_2^2\\
    &\geq 0.
\end{aligned}
\end{equation}

At the subsequent timestamp $t=2$, layer $L-2$ also receives a non-zero error and updates accordingly:
\begin{equation}
\begin{aligned}
    \phi_{L-2}(2) &= \phi_{L-2}(1)+\Delta \phi_{L-2}(1)\\
    &= \phi_{L-2}(1) - \gamma \frac{\partial F}{\partial \phi_{L-2}}(1)\\ 
    &= \phi_{L-2}(0) - \gamma \frac{\partial F}{\partial \phi_{L-2}}(0)\\ 
    &= \phi_{L-2}(0) - \gamma \Big( J_{\phi_{L-2}}(0)^\top \epsilon_{L-1} - \epsilon_{L-2} \Big) \\
    &= \phi_{L-2}(0) - \gamma \Big( J_{\phi_{L-2}}(0)^\top \epsilon_{L-1} \Big),
\end{aligned}
\end{equation}
where $\phi_{L-2}(1) = \phi_{L-2}(0)$ and $\frac{\partial F}{\partial \phi_{L-2}}(1) = \frac{\partial F}{\partial \phi_{L-2}}(0)$, since $\phi_{L-2}$ has remained unchanged at $t=1$, due to $\epsilon_{L-2}(0) = \epsilon_{L-1}(0) = 0$.
Again, at time $t=2$ all previous layers $\phi_\ell(2) \, , \,0 < \ell < L-2$ remain unchanged, as $\epsilon_{L-\ell}(1) = \epsilon_{L-\ell+1}(1) = 0$. By induction, it follows that under these assumptions, any layer $\ell$ requires at least $\hat{t} = L - \ell$ timestamps to update its neural activity and corresponding error neurons.  

For the general case, at a specific time $t$, the error neurons at layer $L-t$ can be expressed as
\begin{equation}
\begin{aligned}
    \|\epsilon_{L-t}(t)\|_2^2 &= \|\phi_{L-t}(t) - f(\Theta_{L-t-1}, \phi_{L-t-1}(t-1))\|_2^2\\
    &= \|\phi_{L-t}(t) - f(\Theta_{L-t-1}, \phi_{L-t-1}(0))\|_2^2\\
    &= \|\phi_{L-t}(t) - \phi_{L-t}(0)\|_2^2,
\end{aligned}
\end{equation}

where $\phi_{L-t}(t) - \phi_{L-t}(0) \neq 0$ if and only if $\phi_{L-t}(t) \neq \phi_{L-t}(0)$. Crucially, this condition can occur only after $\hat{t} = L - \ell$ timestamps. According to Eq.~\eqref{eq:neural_act_learn_rule}, the update of layer $L-t$ depends on the errors from the current and subsequent layers, $\epsilon_{L-t}$ and $\epsilon_{L-t+1}$, respectively. Since $\epsilon_{L-t}$ is itself blocked until $\phi_{L-t}$ changes, the driving term is provided by $\epsilon_{L-t+1}$. However, the latter becomes non-zero only after the previous layer has been updated. Therefore, the propagation of activity changes and error signals strictly follows the network's hierarchy, advancing at most one layer per timestamp, starting from $\epsilon_L$ at $t=0$ and reaching layer $\ell$ only after $\hat{t} = L - \ell$ steps.
\renewcommand{\qedsymbol}{}
\end{proof}

\subsection{Error exponential decay}
\label{app:error_decay}

In this section, we introduce and provide a formal proof of  Theorem~\ref{th:error_decay_theorem}, showing that the squared-$\ell_2$-norm of the feedback error signal for a layer $\ell$ generated at time $\hat{t} = L - \ell$ is bounded by a quantity that decays exponentially proportional to $t$ \citep{goemaere2025error}.

\begin{theorem}[Error exponential decay]
Consider a forward-initialized PC network with discrete-time updates. Assuming an incorrect prediction, the squared $\ell_2$-norm of the feedback error signal at layer $\ell$, $\|\epsilon_\ell(\hat{t})\|_2^2$, at time $\hat{t} = L - \ell$, is upper-bounded by a quantity that decays $\propto \gamma^{2(L-\ell)}$ (proof provided in Appendix~\ref{app:error_decay}).
\label{th:error_decay_theorem}
\end{theorem}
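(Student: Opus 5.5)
The plan is to build on the delay analysis of Theorem~\ref{th:error_delay_theorem} and follow the error signal along the \emph{wavefront} $(\ell,t)=(L-t,t)$, i.e.\ the first time each layer's error becomes non-zero. By that theorem, for every $t<L-\ell$ we have $\phi_\ell(t)=\phi_\ell(0)$ and $\epsilon_\ell(t)=0$. So at the wavefront the update of layer $L-t$ is driven only by $\epsilon_{L-t+1}$, and its own error term drops out of Eq.~\eqref{eq:neural_act_learn_rule}.

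\textbf{Step 1: a one-step recursion.} Write $J_\ell(s)=\partial f(\Theta_\ell\phi_\ell)/\partial\phi_\ell$ evaluated at time $s$. By the delay argument, the predecessor $\phi_{\ell-1}$ is still at its forward-initialised value when $\epsilon_\ell$ first becomes non-zero, so
\begin{equation*}
\epsilon_\ell(\hat t)=\phi_\ell(\hat t)-\phi_\ell(0)=\Delta\phi_\ell(\hat t-1)=\gamma\,J_\ell(\hat t-1)^\top\epsilon_{\ell+1}(\hat t-1).
\end{equation*}
Here $\hat t=L-\ell$. Note that $\phi_\ell(\hat t-1)=\phi_\ell(0)$, so the Jacobian is evaluated at the forward-pass state.

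\textbf{Step 2: bound and unroll.} Taking norms gives $\|\epsilon_\ell(\hat t)\|_2\le\gamma\,\|J_\ell(0)\|_2\,\|\epsilon_{\ell+1}(\hat t-1)\|_2$. Apply the same identity to layer $\ell+1$ at its own wavefront time $\hat t-1=L-(\ell+1)$, then to layer $\ell+2$, and so on up to layer $L-1$. Each step contributes one factor of $\gamma$. The recursion closes at the output, where $\epsilon_L(0)=y-f(\Theta_{L-1}\phi_{L-1}(0))$ is fixed by clamping. The result is
\begin{equation*}
\|\epsilon_\ell(L-\ell)\|_2^2\le\gamma^{2(L-\ell)}\Bigl(\prod_{k=\ell}^{L-1}\|J_k(0)\|_2^2\Bigr)\|\epsilon_L(0)\|_2^2 .
\end{equation*}
Equivalently, $\epsilon_\ell(L-\ell)=\gamma^{L-\ell}J_\ell^\top J_{\ell+1}^\top\cdots J_{L-1}^\top\epsilon_L(0)$ exactly, with all Jacobians taken at the forward-initialised state. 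To match the statement, assume the Jacobian norms are bounded by a constant $C$, e.g.\ via a Lipschitz activation, $\|f'\|_\infty\le\beta$, and $\|\Theta_k\|_2\le M$. The bound then becomes $(\gamma C)^{2(L-\ell)}\|\epsilon_L(0)\|_2^2$, which is $\propto\gamma^{2(L-\ell)}$ since the prefactor does not depend on $\gamma$. This is a genuine exponential decay in depth when $\gamma C<1$.

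\textbf{Main obstacle.} The delicate part is Step 1: showing that at the wavefront time, $\epsilon_\ell$ equals exactly one $\gamma$-scaled step with no additional contributions. This requires carefully using the induction in the proof of Theorem~\ref{th:error_delay_theorem}. Both $\phi_{\ell-1}$ and $\epsilon_\ell$ must be unchanged until time $\hat t-1$, and $\epsilon_{\ell+1}$ must itself be evaluated at \emph{its} wavefront time $\hat t-1$, not at some later time when further updates have accumulated. I would make this explicit with an induction hypothesis on $t$ that states both claims: the zero-error region, and the product formula at the wavefront. The other point to flag is that the ``proportional'' constant depends on the Jacobian products; these should be treated as fixed by the forward pass and independent of $\gamma$, which is what legitimises the $\propto\gamma^{2(L-\ell)}$ phrasing.
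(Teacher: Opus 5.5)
Your proposal is correct and takes essentially the same route as the paper. Both arguments use the delay theorem to reduce the error at each layer's wavefront time to a single $\gamma$-scaled step driven by the next layer's wavefront error, unroll to $\epsilon_\ell(L-\ell)=\gamma^{L-\ell}J_\ell^\top\cdots J_{L-1}^\top\epsilon_L(0)$, and bound by submultiplicativity. Your direct identity $\epsilon_\ell(\hat t)=\Delta\phi_\ell(\hat t-1)$ is actually cleaner than the paper's derivation, which unrolls $\Delta\phi_{\ell-1}(\hat t+1)$ and then cancels the factor $\gamma J_{\phi_{\ell-1}}^\top$ from both sides, a step that is not valid for a generally non-invertible Jacobian.
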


\begin{proof}
Let us consider a forward-initialized PC network with $L+1$ layers, where the neural activity evolves in discrete time steps $t \in \mathbb{N}_0$, and each hidden layer $\phi_\ell(t)$ is initialized as $\phi_\ell(0) = f(\Theta_{\ell-1}\phi_{\ell-1}(0))$. The error neurons are defined as $\epsilon_\ell(t) = \phi_\ell(t) - f(\Theta_{\ell-1}\phi_{\ell-1}(t))$. Here, we make the time dependence of neural activities explicit, as they evolve with time during the inference phase of PC. Considering the update of an arbitrary neuron $\phi_{\ell-1} \, , \, 1 < \ell \leq L$ at time $\hat{t} + 1$, with $\hat{t} = L - \ell$:
\begin{equation}
        \phi_{\ell-1}(\hat{t}+1) = \phi_{\ell-1}(\hat{t}) - \gamma \frac{\partial F}{\partial \phi_{\ell-1}}(\hat{t}), 
\label{eq:update_neural_act_theorem}
\end{equation}
where we can move the on the left side the current neural activity value, obtaining:
\begin{equation}
\begin{aligned}
        \phi_{\ell-1}(\hat{t}+1) - \phi_{\ell-1}(\hat{t}) &= - \gamma \frac{\partial F}{\partial \phi_{\ell-1}}(\hat{t})\\
        \Delta \phi_{\ell-1}(\hat{t}+1)&=- \gamma \frac{\partial F}{\partial \phi_{\ell-1}}(\hat{t})\\
        &= - \gamma \left( \epsilon_{\ell-1}(\hat{t}) - \frac{\partial f\left(\Theta_{\ell-1} \phi_{\ell-1}(\hat{t})\right)}{\partial \phi_{\ell-1} (\hat{t})}^\top \epsilon_{\ell}(\hat{t}) \right)\\
        &= \gamma \left( 
        \frac{\partial f\left(\Theta_{\ell-1} \phi_{\ell-1}(\hat{t})\right)}{\partial \phi_{\ell-1} (\hat{t})}^\top \epsilon_{\ell}(\hat{t})
        \right),
\end{aligned}
\label{eq:unroll_diff}
\end{equation}

where $\epsilon_{\ell-1}(\hat{t}) = 0$ is implied by Theorem~\ref{th:error_delay_theorem}, since $L - \ell-1 < \hat{t}$. We defined $\frac{\partial f\left(\Theta_{\ell-1} \phi_{\ell-1}(\hat{t})\right)}{\partial \phi_{\ell-1} (\hat{t})} = J_{\phi_{\ell-1}} \in \mathbb{R}^{d_{\ell} \times d_{\ell-1}}$ the Jacobian matrix of $f$ at layer $\ell-1$. Here, we have omitted time in $\phi_{\ell-1}$ for brevity, as $\phi_{\ell-1}(\hat{t}) = \phi_{\ell-1}(0)$, following again from Theorem~\ref{th:error_delay_theorem}. Continuing from Eq.~\eqref{eq:unroll_diff}, we expand the error neurons according to their definition and apply Theorem~\ref{th:error_delay_theorem}, resulting in:

\begin{equation}
\begin{aligned}
        \Delta \phi_{\ell-1}(\hat{t}+1)&= \gamma J_{\phi_{\ell-1}}^\top \left[ \phi_{\ell}(\hat{t}) - f\left(\Theta_{\ell-1} \phi_{\ell-1}(\hat{t})\right) \right] \\
        &= \gamma J_{\phi_{\ell-1}}^\top \left[ \phi_{\ell}(\hat{t}-1) - \gamma \frac{\partial F}{\partial \phi_{\ell}}(\hat{t}-1) - f\left(\Theta_{\ell-1} \phi_{\ell-1}(0)\right) \right]\\
        &= \gamma J_{\phi_{\ell-1}}^\top \left[ \phi_{\ell}(0) - \gamma \frac{\partial F}{\partial \phi_{\ell}}(\hat{t}-1) - f(\Theta_{\ell-1} \phi_{\ell-1}(0)) \right] \\
        &= \gamma J_{\phi_{\ell-1}}^\top \left[ f\left(\Theta_{\ell-1} \phi_{\ell-1}(0)\right) - \gamma \frac{\partial F}{\partial \phi_{\ell}}(\hat{t}-1) - f\left(\Theta_{\ell-1} \phi_{\ell-1}(0)\right) \right],
\end{aligned}
\end{equation}
where on the first line we have substituted $\phi_{\ell}(\hat{t})$ with its definition in Eq.~\eqref{eq:update_neural_act_theorem}. According to Theorem~\ref{th:error_delay_theorem}, $\phi_{\ell}(\hat{t}-1) = \phi_{\ell}(0)$ as $L -\ell < \hat{t}-1$, thus this allows to re-write it using the forward initialization definition, as done on the fourth line. By repeating the same steps as in Eq.~\eqref{eq:unroll_diff}, we obtain

\begin{equation}
\begin{aligned}
        \Delta \phi_{\ell-1}(\hat{t}+1)&= \gamma J_{\phi_{\ell-1}}^\top \left[  - \gamma \frac{\partial F}{\partial \phi_{\ell}}(\hat{t}-1) \right]\\
        &= \gamma J_{\phi_{\ell-1}}^\top \left[ - \gamma \left( \epsilon_{\ell}(\hat{t}-1) - J_{\phi_{\ell}}^\top \epsilon_{\ell+1}(\hat{t}-1) \right) \right]\\
        &= \gamma^2 J_{\phi_{\ell-1}}^\top J_{\phi_{\ell}}^\top \epsilon_{\ell+1}(\hat{t}-1).
\end{aligned}
\end{equation}

 By unrolling the formulation backward until the last layer, we get:
\begin{equation}
    \Delta \phi_{\ell-1}(\hat{t}+1) = \gamma^{L-\ell+1}  \left( \prod_{i=0}^{L-\ell} J_{\phi_{\ell-1+i}}^\top \right) \epsilon_L(0).
\label{eq:delta_unroll}
\end{equation}
By substituting Eq.~\eqref{eq:unroll_diff} in Eq.~\eqref{eq:delta_unroll}, we can continue as follows:
\begin{equation}
    \gamma \left( J_{\phi_{\ell-1}}^\top \epsilon_{\ell}(\hat{t}) \right) = \gamma^{L-\ell+1}  \left( \prod_{i=0}^{L-\ell} J_{\phi_{\ell-1+i}}^\top \right) \epsilon_L(0),
\end{equation}
which can be further simplified by eliding common terms, resulting in: 
\begin{equation}
    \epsilon_{\ell}(\hat{t}) = \gamma^{L-\ell}  \left( \prod_{i=1}^{L-\ell} J_{\phi_{\ell-1+i}}^\top \right) \epsilon_L(0).
\label{eq:subbound}
\end{equation}

By writing the squared-$\ell_2$ norm of  Eq.~\eqref{eq:subbound}, we finally derive the upper-bound for the error term:

\begin{equation}
\begin{aligned}
    \|\epsilon_{\ell}(\hat{t})\|_2^2 &= \| \gamma^{L-\ell}  \left( \prod_{i=1}^{L-\ell} J_{\phi_{\ell-1+i}}^\top \right) \epsilon_L(0)\|_2^2\\
    &\leq \gamma^{2(L-\ell)}  \left( \prod_{i=1}^{L-\ell} \|J_{\phi_{\ell-1+i}}^\top\|_2^2 \right) \|\epsilon_L(0)\|_2^2.
\end{aligned}
\end{equation}

\renewcommand{\qedsymbol}{}
\end{proof}

\subsection{Theoretical and Empirical Analysis of DKP-PC Integration}
\label{app:dkp-pc}
In this appendix, we further analyse DKP-PC, providing the reader with additional theoretical and empirical insights into the mechanisms through which the DKP and PC stages interact. Building on the results presented in the main text, we show how the single-step neural activity optimization introduced by the PC stage improves both the updates of the forward and feedback weights of the network.

\subsubsection{Theoretical Analysis of DKP-PC Updates}
\label{app:dkp-pc-theoretical}
In this subsection, we derive analytically the neural activity and feedback weight updates within the proposed DKP-PC algorithm. We will follow the steps of the DKP-PC algorithm outlined in Algorithm~\ref{alg:dkppc-alg}. Besides, for the sake of mathematical tractability, we make the same assumption of linearity of the network as in Appendix~\ref{app:dkp}.

\textit{1) Direct feedback alignment update} -- First, the forward weights $\Theta_\ell$ are updated using the approximate gradients $\tilde{\delta}_\ell = \Psi_\ell \delta_L$, provided through the DKP feedback matrices $\Psi_\ell$. This results in updated forward weights
\begin{equation}
    \tilde{\Theta}_\ell = \Theta_\ell + \Delta \Theta_\ell\textrm{,}
\end{equation}
where the DKP forward weights' update is given by
\begin{equation}
    \Delta \Theta_\ell = -\alpha \tilde{\delta}_{\ell+1} \phi_\ell^\top = -\alpha \Psi_{\ell+1} \delta_L \phi_\ell^\top\label{eq:dkp_theta_update}\textrm{.}
\end{equation}

\textit{2) Inference phase} -- The DKP forward weights' update is followed by a single update of the neural activity $\phi_\ell$, aiming to minimize the network's FE, as opposed to several steps for usual PC algorithms for spreading the error information. This leads to the following optimized neural activity:
\begin{equation}
    \phi_\ell^{*} = \phi_\ell + \Delta \tilde{\phi}_{\ell}\textrm{.}
\label{eq:new_na_after_dkp}
\end{equation}
The updated neural activity $\phi_\ell^{*}$ now incorporates the information injected in the forward weights through the DKP update, as the error neurons are computed with the new forward weights values as $\tilde{\epsilon}_{\ell+1} = \phi_{\ell+1} - \tilde{\Theta}_\ell \phi_\ell$, as shown by computing the neural activity update $\Delta \tilde{\phi}_{\ell}$ based on the FE after the DKP weight update $\tilde{F}$:
\begin{equation}
\begin{aligned}
    \Delta \tilde{\phi}_{\ell} &= -\gamma \frac{\partial \tilde{F}}{\partial \phi_\ell}\\
    &= \gamma \left( \tilde{\Theta}_\ell^\top \tilde{\epsilon}_{\ell+1} - \tilde{\epsilon}_\ell \right)\\
    &=  \gamma \left[ \tilde{\Theta}_\ell^\top \left( \phi_{\ell+1} - \tilde{\Theta}_\ell \phi_\ell \right) - \left( \phi_{\ell} - \tilde{\Theta}_{\ell-1} \phi_{\ell-1} \right) \right] \\
    &= \gamma \left[ \left(\Theta_\ell + \Delta \Theta_\ell\right)^\top \left( \phi_{\ell+1} - \left(\Theta_\ell + \Delta \Theta_\ell\right) \phi_\ell \right) - \left( \phi_{\ell} - \left(\Theta_{\ell-1} + \Delta \Theta_{\ell-1}\right) \phi_{\ell-1} \right) \right]  \\
    &= \gamma \left[ \left(\Theta_\ell + \Delta \Theta_\ell\right)^\top \left( \phi_{\ell+1} - \Theta_\ell \phi_\ell - \Delta \Theta_\ell \phi_\ell \right) - \left( \phi_{\ell} - \Theta_{\ell-1}\phi_{\ell-1} - \Delta \Theta_{\ell-1}\phi_{\ell-1} \right) \right]  \\
    &= \gamma \left[ \left(\Theta_\ell + \Delta \Theta_\ell\right)^\top \left( \epsilon_{\ell+1} - \Delta \Theta_\ell \phi_\ell \right) - \left( \epsilon_{\ell} - \Delta \Theta_{\ell-1}\phi_{\ell-1} \right) \right]  \\
    %&\propto  \Theta_\ell^\top\epsilon_{\ell+1} - \Theta_\ell^\top \Delta \Theta_\ell \phi_\ell + \Delta \Theta_\ell^\top \epsilon_{\ell+1} - \Delta \Theta_\ell^\top \Delta \Theta_\ell \phi_\ell - \epsilon_{\ell} + \Delta \Theta_{\ell-1}\phi_{\ell-1} \\
    &= \gamma \left[ \Theta_\ell^\top \epsilon_{\ell+1} - \epsilon_{\ell} + \Delta \Theta_\ell^\top \epsilon_{\ell+1} - \left(\Theta_\ell + \Delta\Theta_\ell\right)^\top \Delta \Theta_\ell \phi_\ell + \Delta \Theta_{\ell-1}\phi_{\ell-1} \right] \\
    &= \gamma \left( {\nabla_{\phi_\ell} F } + \Delta \Theta_\ell^\top \epsilon_{\ell+1} - \tilde{\Theta}_\ell^\top \Delta \Theta_\ell \phi_\ell + \Delta \Theta_{\ell-1}\phi_{\ell-1} \right)\textrm{.}
\end{aligned}
\label{eq:neural_act_dkp_riform}
\end{equation}
From Eq.~\eqref{eq:neural_act_dkp_riform}, we can see how Eq.~\eqref{eq:new_na_after_dkp} can be reformulated as the original FE gradient with respect to the neural activity before the DKP weight update, denoted as $\nabla_{\phi_\ell} F$, plus terms resulting from the DKP weight update itself.

Note that, because of the forward-initialization of the network, we have $\phi_{\ell+1} = \Theta_\ell \phi_\ell$ for $\ell < L-1$, except at layer $L-1$, as $\phi_L$ has been clamped to the target label at the beginning of this stage. This implies that, for intermediate layers $\ell = 1$ to $L-1$, the error nodes $\epsilon_{\ell}$ are equal to zero for a single inference step. The DKP-PC neural activity update thus simplifies to
\begin{equation}
    \Delta \tilde{\phi}_{\ell} = \gamma \left( - \tilde{\Theta}_\ell^\top \Delta \Theta_\ell \phi_\ell + \Delta \Theta_{\ell-1} \phi_{\ell-1} \right)\textrm{.}
\label{eq:simplified_na}
\end{equation}
By now substituting the weight update term from Eq.~\eqref{eq:dkp_theta_update} in Eq.~\eqref{eq:simplified_na}, we obtain
\begin{equation}
\begin{aligned}
    \Delta \tilde{\phi}_{\ell} &= \gamma \left( - \Theta_\ell^\top \Delta \Theta_\ell \phi_\ell - \Delta \Theta_\ell^\top \Delta \Theta_\ell \phi_\ell + \Delta \Theta_{\ell-1} \phi_{\ell-1} \right)\\
    &= \alpha \gamma \left( \Theta_\ell^\top \Psi_{\ell+1} \delta_L \phi_\ell^\top \phi_\ell - \alpha \phi_\ell \delta_L^\top \Psi_{\ell+1}^\top \Psi_{\ell+1} \delta_L \phi_\ell^\top \phi_\ell - \Psi_\ell \delta_L \phi_{\ell-1}^\top \phi_{\ell-1} \right)\\
    &= \alpha \gamma \left( \|\phi_\ell\|_2^2 \Theta_\ell^\top \Psi_{\ell+1} \delta_L - \alpha \|\phi_\ell\|_2^2 \|\Psi_{\ell+1} \delta_L\|_2^2 \phi_\ell - \|\phi_{\ell-1}\|_2^2 \Psi_\ell \delta_L \right)\\
    &= \underbrace{\alpha \gamma \left( \|\phi_\ell\|_2^2 \Theta_\ell^\top \tilde{\delta}_{\ell+1} - \|\phi_{\ell-1}\|_2^2 \tilde{\delta}_{\ell} \right)}_{\textrm{Alignment term}} - \underbrace{\alpha^2 \gamma \left( \|\phi_\ell\|_2^2 \|\tilde{\delta}_{\ell+1}\|_2^2 \phi_\ell \right)}_{\textrm{Regularization term}}\label{eq:dkp-pc_neural_activity_update}\textrm{.}
\end{aligned}
\end{equation}
The first term in the neural activity update quantifies the discrepancy between two estimates of the error at layer $\ell$, namely the direct projection of the output error $\delta_L$ onto layer $\ell$, given by $\Psi_\ell \delta_L = \tilde{\delta}\ell$, and the backward projection produced by PC from layer $\ell+1$ through the error node, given by $\Theta\ell^\top \Psi_{\ell+1} \delta_L = \Theta_\ell^\top \tilde{\delta}_{\ell+1}$. Note that these projections are respectively weighted by the squared $\ell_2$ norm of neural activities $\phi_{\ell-1}$ and $\phi_{\ell}$. The second term of the update is an activity regularization term whose strength evolves dynamically throughout training and proportionally to $\|\phi_\ell\|^2$ and $\|\tilde{\delta}_{\ell+1}\|^2 = \|\Psi_{\ell+1} \delta_L\|^2$. Note that this specific interpretation holds only in the single-step inference phase regime, which is anyway the one considered in this paper.

\textit{3) Learning phase} -- Following the consecutive updates of the forward weights $\Theta_\ell$ and the neural activity $\phi_\ell$, the PC forward weight update, which aims to minimize the FE, is given by
\begin{equation}
\begin{aligned}
    \Delta \tilde{\Theta}_{\ell} &\propto \frac{\partial \tilde{F}}{\partial \tilde{\Theta}_\ell}\\
    &\propto \tilde{\epsilon}_{\ell+1} \phi_{\ell}^{*\top}\textrm{.}
\end{aligned}
\label{eq:theoretical_forward_update}
\end{equation}
While the unrolled mathematical expression of this update under the DKP regime results in a complex formulation that offers limited intuition, we empirically demonstrate in Appendix~\ref{app:dkp-pc-empirical} that the alignment term injected into the neural activity, and consequently into the subsequent weight update, is essential for achieving a more stable and stronger alignment between feedback and forward weights, thereby providing clearer insight into its role within DKP-PC. Finally, the feedback matrices $\Psi_\ell$ are updated as follows:

\begin{equation}
\Delta \Psi_\ell \propto \phi_{\ell}^* \tilde{\epsilon}_L^\top.
\label{eq:new_dkppc_feedback_update_rule}
\end{equation}
As for the previous case, although the explicit derivation of this update leads to a complex expression even in the linear case, our experiments in Appendix~\ref{app:dkp-pc-empirical} again confirm that the alignment component introduced through the neural activity update, and thus propagated into the feedback matrices, is crucial for obtaining better alignment between feedback and forward weights, demonstrating its contribution within DKP-PC.

Our analysis shows that the initial DKP forward-weight update induces non-zero errors $\epsilon_\ell$ across all layers, enabling the neural activities to be updated in a single inference step. This feature solves two key limitations of PC: the error delay by generating non-zero errors right away, and the error exponential decay, by directly projecting the output error $\delta_L$ to each layer through the feedback weights. Furthermore, the optimized neural activity $\phi_\ell^{*}$ from DKP-PC includes alignment information that is successively injected into the PC forward weights' update and the DKP feedback weights' update. This leads to faster convergence than PC or DKP individually, as it enables earlier and stronger error signals and improves the alignment between forward and feedback pathways, producing weight updates that more closely approximate those of BP.

\subsubsection{Empirical Analysis of Gradient Alignment}
\label{app:dkp-pc-empirical}
In this subsection, we empirically analyse the alignment between the forward weight gradients computed according to DKP-PC and standard DKP algorithms, in comparison with BP. The alignment is quantified using the cosine similarity $\cos(\theta) \in [-1,1]$, which measures the directional agreement between two arbitrary column vectors $u \in \mathbb{R}^{d \times 1}$ and $v \in \mathbb{R}^{d \times 1}$ as follows:
\begin{equation}
\label{eq:cosinesim}
\cos(\theta) = \frac{u^\top v}{\|u\|  \|v\|}= \frac{\sum_{i=1}^{n} u_i v_i}{\sqrt{\sum_{i=1}^{n} u_i^2} \, \sqrt{\sum_{i=1}^{n} v_i^2}},
\,
\end{equation}
where $\|\cdot\|$ denote the Euclidean norm. The cosine similarity reaches a maximum value of $1$ when the vectors are perfectly aligned ($\theta = 0$), a minimum value of $-1$ when they are diametrically opposed ($\theta = \pi$), and a value of $0$ when they are orthogonal ($\theta = \pi/2$), indicating no directional correlation. Figure \ref{fig:alignment-comparison} depicts the alignment across the nine layers of the VGG9-like CNN from Section~\ref{results} trained for 50 epochs on CIFAR-100. For both DKP and DKP-PC networks, the hyperparameters correspond to the best-performing configuration reported in Appendix~\ref{app:train_specs}. All curves consider instantaneous gradients, excluding weight decay and momentum contributions, and are smoothed using an exponential moving average with a window of 100 batches. The gradient for DKP-PC is computed by summing the gradients resulting from its DKP and PC phases. First, we analyse the alignment's behaviour of DKP-PC to that of standard DKP. Then, we analyse the effect of disabling the update of feedback and forward weights after computing the neural activity optimization, to empirically support our claims in the previous subsection.

\begin{figure}[!ht]
\begin{center}
\includegraphics[width=0.9\textwidth]{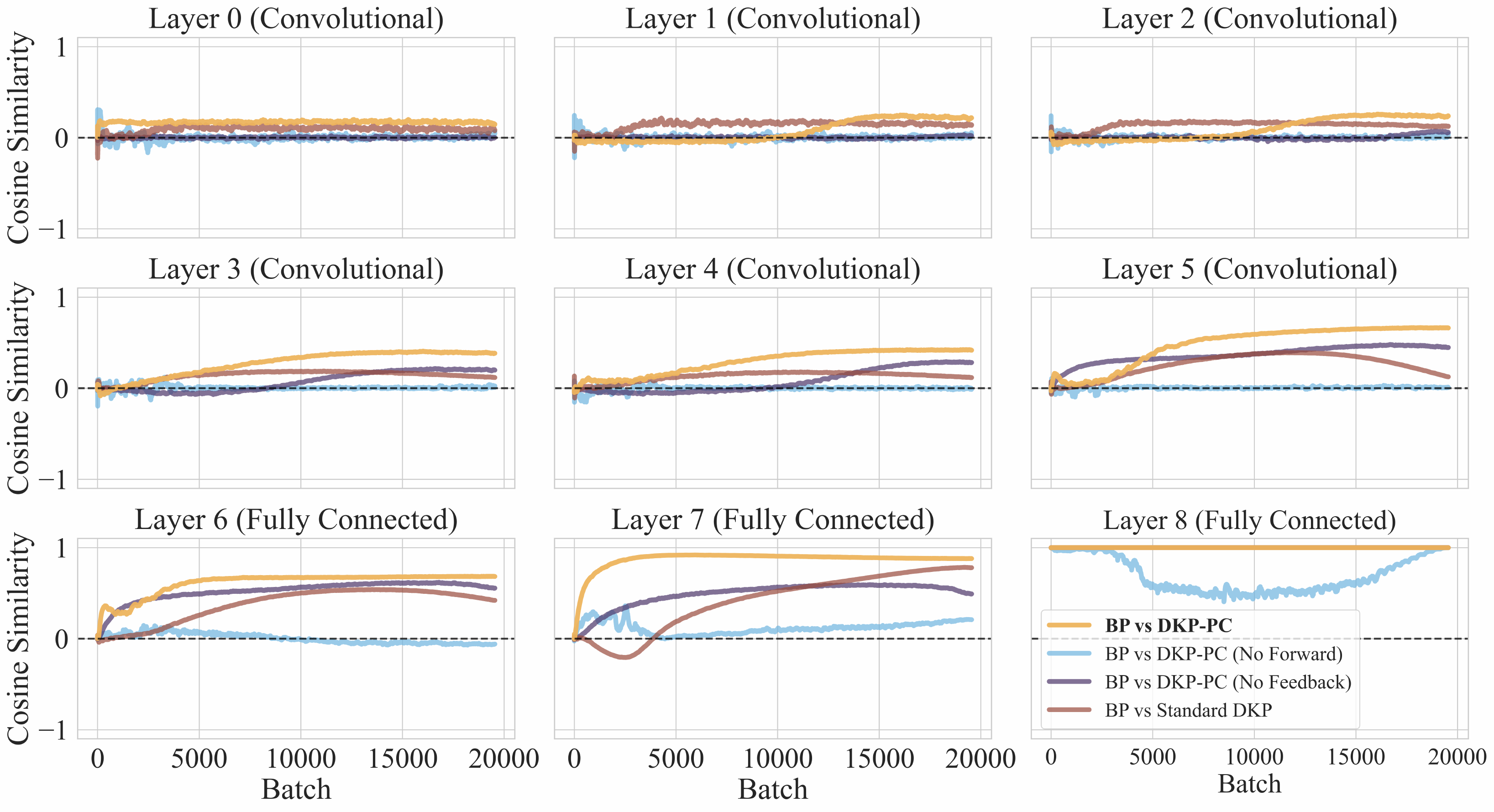}
\end{center}
\caption{Forward weight gradients alignment across layers of a VGG9-like CNN trained for 50 epochs on CIFAR-100. Each curve shows the cosine similarity between the instantaneous forward-weight gradient produced by DKP and DKP-PC algorithms, compared to the one computed with BP. All gradients exclude weight decay and momentum and are smoothed using an exponential moving average with a window of 100 batches. DKP (brown) displays positive but slow alignment with BP, progressively deteriorating with increasing distance from the output layer. DKP-PC (yellow) gradient is computed as sum of the gradients resulting from DKP and PC stages. It achieves consistently faster, higher, and more stable alignment across all layers compared to standard DKP. The light blue curve shows that disabling the PC forward-weight update in DKP-PC causes alignment to collapse in all layers, confirming its role in injecting alignment information into the forward weights. The blue curve, obtained by disabling the feedback-weight update in DKP-PC, demonstrates that the alignment and regularization terms introduced by the PC stage also improve the update of the feedback matrices, resulting in worse alignment when disabled.}
\label{fig:alignment-comparison}
\end{figure}

From Figure \ref{fig:alignment-comparison}, we can observe that, although standard DKP (brown curves) exhibits a positive gradient alignment with BP, its convergence is slow, and the alignment progressively deteriorates with increasing distance from the output layer, consistent with the analysis in Appendix \ref{app:dkp}. Several batches are required before reaching high cosine similarity, and alignment degradation is also observed at the end of training, particularly in layers 6 and 7, consistently with observations made in the FA literature \citep{refinetti2021align}.
In contrast, DKP-PC (yellow curves) achieves a faster and higher alignment with BP than standard DKP across all nine layers. DKP-PC indeed converges with fewer batches and exhibits better and more stable alignment throughout training. These observations not only align with the improved performance observed in the classification experiments but also support the theoretical analysis presented in the previous subsection. As shown in Eq.~\eqref{eq:dkp-pc_neural_activity_update}, the neural activity update incorporates both an alignment and a regularization terms, which contribute to the forward weight update in Eq.~\eqref{eq:theoretical_forward_update}. Consequently, this stage can be interpreted as a regularization factor applied to the preliminary DKP update in DKP-PC, which over time stabilizes alignment and compensates for the error introduced by the Moore–Penrose pseudoinverse (for both forward and feedback weights). To further support this claim, Figure \ref{fig:alignment-comparison} also includes two versions of DKP-PC in which the PC forward weight update and the DKP feedback weight update have respectively been ablated. For DKP-PC without PC forward weight update (light blue curves), alignment collapses in all hidden layers and deteriorates even in the output layer, highlighting the key role of injecting alignment information in the forward weights through the updated neural activity.
Moreover, the improvement in DKP-PC alignment also arises from the influence of the alignment and regularization terms in Eq.~\eqref{eq:dkp-pc_neural_activity_update} on the feedback matrix update via Eq.~\eqref{eq:new_dkppc_feedback_update_rule}. This yields a better-aligned update and again compensates for the distortion introduced by the Moore–Penrose pseudoinverse. This claim is empirically supported by the results for DKP-PC without DKP feedback weight update (blue curves). Consistent with expectations, alignment decreases relative to DKP-PC, exhibiting a slower and less effective value in every layer.

Through this analysis, we complement the preceding theoretical subsection by empirically demonstrating the synergy between the DKP and PC stages in DKP-PC. We show that DKP not only helps PC overcome two of its main limitations,  exponential error decay and error propagation delay, but that PC, in turn, acts as a regularizer for the DKP update, improving gradient alignment with BP across all layers and yielding a more effective learning algorithm. These results also reveal a possible alternative interpretation of DKP-PC: it may be viewed both as an acceleration mechanism for training PC networks and, alternatively, as a method to enhance alignment in feedback-alignment approaches. Furthermore, DKP-PC achieves these improvements without compromising performance or the locality of computations and updates, thereby preserving the hardware-friendly properties of both PC and DKP.

\subsubsection{Empirical analysis of the Inference phase}
\label{app:inference-phase}
In this section, we empirically analyse the inference phase of the DKP-PC algorithm and its incremental version, denoted iDKP-PC. DKP-PC performs the update of the forward weights, given by Eq.~\eqref{eq:weight_update_rule} at the end of the inference phase, after optimizing the neural activity following Eq.~\eqref{eq:neural_act_learn_rule}. In contrast, as introduced in \citet{salvatori2024stablefastfullyautomatic}, iPC performs an update of the forward weights
after every step of the neural activity optimization. The same concept can be applied to DKP-PC under a multiple–inference–steps regime (as opposed to the single-step regime considered in the main text of the paper). Indeed, once the forward weights are perturbed by the DKP update, they can be further optimized after each neural activity update, exactly as in iPC. The key difference is that an error signal is already available at the very beginning of the inference phase, both for the neural activity and for the forward weights.

Figure~\ref{fig:energy-evolution} compares the energy of four different four-layer MLP models trained on a Fashion-MNIST batch and averaged over 10 trials, while varying the magnitude of the neural activity learning rate. The curves depict the energy evolution during the inference phase for DKP-PC (blue), iDKP-PC (light blue), PC (brown), and iPC (yellow). Both DKP-PC and iDKP-PC start from a higher energy than PC and iPC, as mitigating the error-delay problem provides an error term at every layer, leading to larger energy values. In both cases, the incremental algorithms decrease toward a lower minimum than the non-incremental ones. This follows from the additional degree of freedom introduced by updating the forward parameters during inference. Interestingly, DKP-PC and iDKP-PC respectively converge to similar energy values than PC and iPC, suggesting that after several neural activity updates, the gradient term from PC in Eq.~\eqref{eq:neural_act_dkp_riform} dominates the dynamics, driving the networks toward similar low-energy regions. However, unlike standard PC, DKP-PC requires more optimization steps to reach these low energy values, likely due to both the larger error present at every layer and the presence of two distinct driving forces in the neural activity update: the standard PC gradient and the term implicitly introduced by DKP. Notably, the incremental version of DKP-PC demonstrates a convergence speed similar to that of standard iPC instead. This suggests that allowing forward parameters to update during inference compensates for the additional complexity introduced by DKP, effectively stabilizing the dynamics and enabling the network to exploit the immediate availability of error signals.

\begin{figure}[!ht]
\begin{center}
\includegraphics[width=0.9\textwidth]{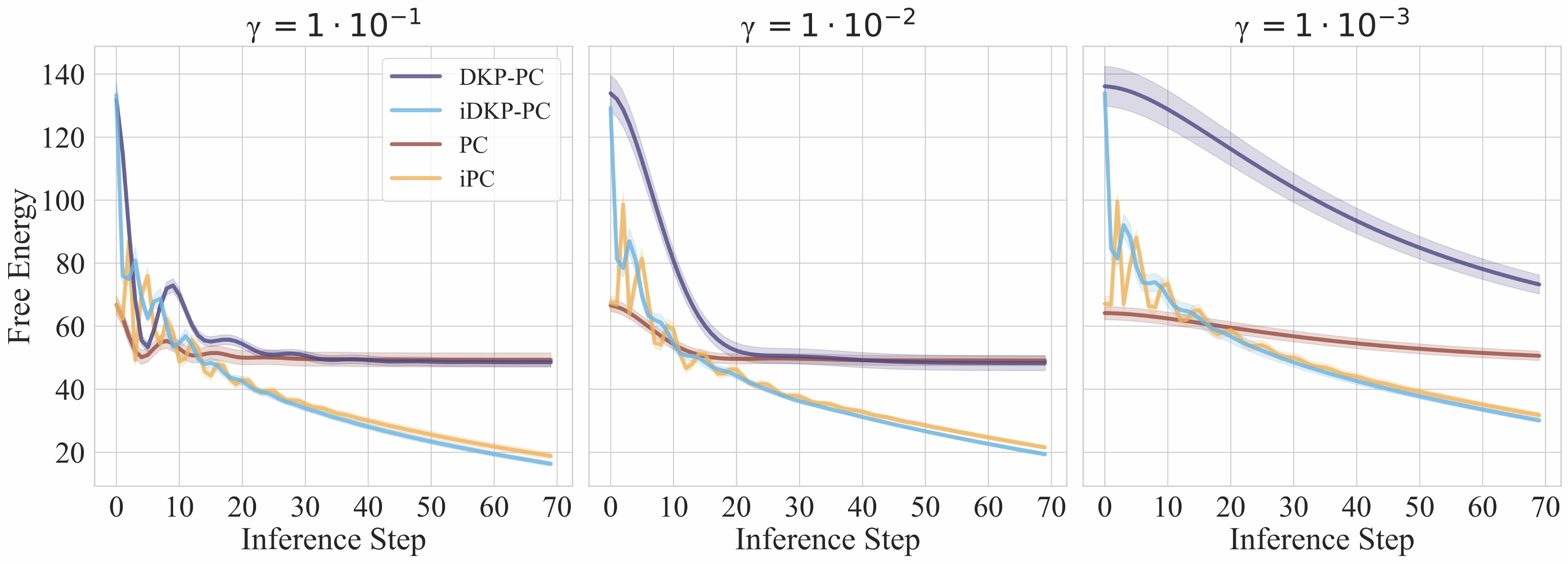}
\end{center}
\caption{Energy evolution of four-layer MLP networks on a Fashion-MNIST batch for three different neural activity learning rate magnitudes. DKP-PC and its incremental variant are shown in blue and light blue, respectively, while standard PC and iPC are represented in brown and yellow. Both DKP-PC variants start from higher energy values due to the immediate error term at every layer, and converge to levels similar to those of the standard PC and iPC networks. Although DKP-PC exhibits slower convergence due to the additional terms in the neural activity dynamics, its incremental version equals the convergence speed of iPC across all evaluated learning rates, suggesting that updating forward parameters during inference effectively compensates for the additional complexity introduced by DKP.}
\label{fig:energy-evolution}
\end{figure}

While in principle the neural activity optimization in PC can be run until full minimization of the FE, in practice it is performed for a finite number of steps \citep{pinchetti2024benchmarking}, typically exceeding the network’s depth to achieve the best results \citep{goemaere2025error}. Figure~\ref{fig:accuracy-evolution} presents the corresponding behaviour for DKP-PC and iDKP-PC by reporting their test accuracy as a function of the total number of inference steps, with boxplots showing the distribution across 30 trials. The evaluated networks are the same as for Figure~\ref{fig:energy-evolution}. Both algorithms exhibit a positive correlation between the number of neural activity optimization steps per batch and the final test accuracy, with higher numbers of inference steps yielding the best results in both cases. This also highlights a fundamental trade-off for DKP-PC networks, consistently with PC networks, between training time and performance. Nevertheless, as also noted in the main text, our proposed DKP-PC approach enables a better exploitation of the trade-off, as even a single inference step is sufficient to achieve results that surpass both baselines and advanced PC variants.

\begin{figure}[!ht]
\begin{center}
\includegraphics[width=0.9\textwidth]{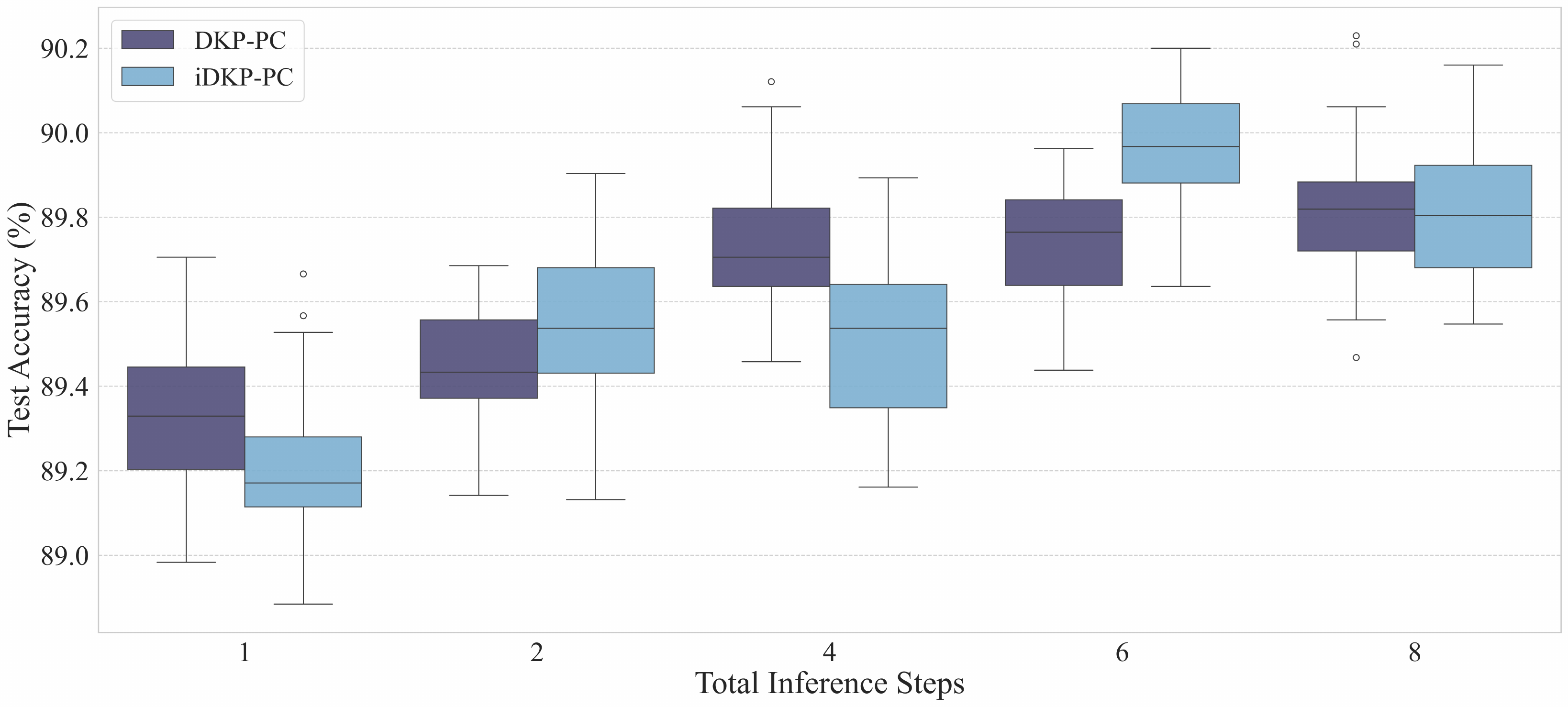}
\end{center}
\caption{Test accuracy distributions over 30 trials are shown as a function of the total number of neural activity optimization steps. Blue boxplots correspond to a four-layer MLP trained on Fashion-MNIST with DKP-PC, while light blue boxplots show the same architecture trained with iDKP-PC. In line with PC theory, both methods display a positive correlation between the number of optimization steps and the final test accuracy.}
\label{fig:accuracy-evolution}
\end{figure}

\subsection{Details of Classification Experiments}
\label{app:train_specs}
\textit{Models and datasets} -- The MLP models are evaluated on the MNIST and Fashion-MNIST datasets, which contain 28$\times$28 grayscale images and comprise 60k training and 10k test samples across 10 classes.
The CNN architectures are evaluated on CIFAR-10, CIFAR-100, and Tiny~ImageNet. CIFAR-10 and CIFAR-100 consist of 32$\times$32 RGB images with 50k training and 10k test samples for 10 and 100 classes, respectively. Tiny~ImageNet contains 200 classes with 100k training and 10k validation images of size 64$\times$64.
The MLPs use two hidden layers with 128 units each. The VGG-like CNNs include six convolutional layers and either one or three fully-connected layers, depending on the variant.
To ensure comparability with prior work and to support future benchmarking efforts, we adopt the same architectures as \citet{pinchetti2024benchmarking}. The full architectural specifications are reported in Table~\ref{tab:architectures}. For completeness and to ensure reproducibility, Table~\ref{tab:hyperparams} reports the optimal DKP-PC hyperparameters identified through our hyperparameter search, which were successively used to evaluate the model and obtain the results presented in Section~\ref{results}.

\begin{table}[h]
\vspace{1em}
\caption{Architectural details. FC size refers to the number of units in the fully-connected layers after the convolutional ones (if any).}
\vspace{1em}
\centering
\begin{tabular}{lccc}
\hline
 & \textbf{MLP} & \textbf{VGG-7} & \textbf{VGG-9} \\
\hline
Conv. channels & -- & {[}128x2, 256x2, 512x2{]} & {[}128x2, 256x2, 512x2{]} \\
Kernel sizes         & -- & {[}3, 3, 3, 3, 3, 3{]} & {[}3, 3, 3, 3, 3, 3{]} \\
Strides              & -- & {[}1, 1, 1, 1, 1, 1{]} & {[}1, 1, 1, 1, 1, 1{]} \\
Paddings             & -- & {[}1, 1, 1, 0, 1, 0{]} & {[}1, 1, 1, 1, 1, 1{]} \\
Pool window          & -- & $2 \times 2$ & $2 \times 2$ \\
Pool stride          & -- & 2 & 2 \\
FC size         & {[}128, 128, output{]} & {[}output{]} & {[}4096, 4096, output{]} \\
\hline
\end{tabular}
\label{tab:architectures}
\end{table}

\begin{table}[!ht]
\vspace{1em}
\caption{Hyperparameters employed for the DKP-PC networks in our experiments.}
\vspace{1em}
\centering
\resizebox{\linewidth}{!}{
\begin{tabular}{l|cc|cc|ccc}
\midrule
& \multicolumn{2}{c|}{\textbf{MLP}} & \multicolumn{2}{c|}{\textbf{VGG-7}} & \multicolumn{2}{c}{\textbf{VGG-9}} \\
& MNIST & FMNIST & CIFAR-10 & CIFAR-100 & CIFAR-10 & CIFAR-100 & Tiny ImageNet\\
\midrule
activation &  gelu&  gelu&  gelu&  tanh&  leaky&  leaky&  gelu\\
fw-lr &  $4.616 \mathrm{e}{-4}$&  $5.254 \mathrm{e}{-4}$&  $1.458 \mathrm{e}{-4}$&  $2.482 \mathrm{e}{-4}$&  $1.609\mathrm{e}{-4}$&  $1.602 \mathrm{e}{-4}$&  $7.373 \mathrm{e}{-5}$\\
fw-decay &  $3.737 \mathrm{e}{-2}$&  $2.744 \mathrm{e}{-5}$&  $3.626 \mathrm{e}{-4}$&  $9.664\mathrm{e}{-2}$&  $5.271\mathrm{e}{-2}$&  $1.040\mathrm{e}{-2}$& $2.893 \mathrm{e}{-5}$\\
fw-opt &  adamw&  adamw&  adam&  adam&  adam&  adam&  adamw\\
i-lr &  $1.068 \mathrm{e}{-3}$&  $8.297 \mathrm{e}{-1}$&  $5.655 \mathrm{e}{-2}$&  $1.036\mathrm{e}{-2}$&  $1.113\mathrm{e}{-3}$&  $1.169\mathrm{e}{-2}$& $3.136 \mathrm{e}{-3}$\\
i-mom &  0&  0&  0&  0&  0&  0& 0\\
i-steps &  1&  1&  1&  1&  1&  1& 1\\
fb-init &  ka-unif.&  ka-unif.&  orthog.&  ka-norm.&  ka-unif.&  xav-unif.&  orthog.\\
fb-lr &  $3.024 \mathrm{e}{-5}$&  $4.702 \mathrm{e}{-5}$&  $1.533 \mathrm{e}{-3}$&  $1.333\mathrm{e}{-3}$&  $1.664\mathrm{e}{-3}$&  $9.405\mathrm{e}{-4}$& $2.839 \mathrm{e}{-4}$\\
fb-decay &  $2.446 \mathrm{e}{-3}$&  $2.744 \mathrm{e}{-5}$&  $5.215 \mathrm{e}{-5}$&  $4.406\mathrm{e}{-5}$&  $1.099\mathrm{e}{-4}$& $1.040\mathrm{e}{-2}$& $4.656 \mathrm{e}{-5}$\\
fb-opt & adamw&  nadam&  adamw&  adamw&  adam&  nadam& adam\\
fb-gamma &  0.99975&  0.9995&  1&  0.99995& 0.9999&  0.9995& 0.99925\\
\midrule
\end{tabular}
\label{tab:hyperparams}
}
\end{table}

\textit{Training setup} -- Consistently with the work of \citet{pinchetti2024benchmarking}, MLPs are trained for 25 epochs and CNNs for 50 epochs, using a batch size of 128. Data augmentation is applied in the CNN experiments. For CIFAR-10 and CIFAR-100, images are randomly cropped to 32×32 pixels with 4-pixel padding during training. For Tiny ImageNet, random crops of 56×56 pixels are used during training without padding, while the test set is evaluated using centered crops of 56×56 pixels, also without padding. The forward weights' learning rate is updated using a warmup-cosine-annealing scheduler without restarts. The optimizers considered include Adam and AdamW. Feedback connections are trained with a separate optimizer than the forward weights, using an exponentially decaying learning rate updated per batch via an exponential learning rate scheduler, with the update parameter fb-gamma reported in Table~\ref{tab:hyperparams}. Different feedback initializations are explored during the hyperparameter search: Xavier-uniform/normal \citep{glorot2010understanding}, Kaiming-uniform/normal \citep{he2015delving}, and orthogonal \citep{saxe2013exact}. Feedback optimizers include Adam, AdamW, and Nadam \citep{adam2014method, dozat2016incorporating, loshchilov2017decoupled}. All additional experimental details are available on our GitHub repository [link will be included after double-blinded revision].

\subsection{Computational Trade-offs}
\label{app:computational_trade-offs}
Resource consumption experiments focus on latency and floating-point operations (FLOPs), evaluated on both an MLP and a CNN. For the MLP, experiments are conducted on a network with 256 units per layer using a single sample from MNIST, whereas for the CNN, a VGG-like model with 64-channel $3 \times 3$ convolutions is evaluated on a single sample from CIFAR-10. Latency is defined as the time required for a complete parameter update, including feedforward initialization and the update of forward and feedback matrices (when applicable). Computational cost is estimated by counting FLOPs in forward and backward passes, restricted to core MAC operations, with 1 MAC = 2 FLOPs\footnote{Variable contributions, such as activation function FLOPs, are excluded as they depend on the specific non-linearity. Accordingly, the reported FLOPs represent a lower-bound estimate of the actual computational cost}.  

The first row of Figure~\ref{fig:resource_comparison.png} illustrates the differences in training time, expressed in milliseconds, across the various models. Training time includes the forward pass and a complete backward pass, encompassing both forward and feedback weight updates, if applicable. The forward pass contribution is indicated by a red dashed line. The fastest algorithm is DKP, owing to the reduced dimensionality introduced in its update in Eq.~\eqref{eq:dkp-update}. BP is the second fastest algorithm, followed by DKP-PC. The latter, as well as the other local algorithms, have been evaluated in sequential mode, and therefore their parallelization potential is not considered in this section. Nevertheless, this already highlights the speed-up of DKP-PC compared to standard PC or iPC, as it consistently requires less time to fully update its parameters for both the MLP and CNN.  

The second row of Figure~\ref{fig:resource_comparison.png} compares the minimum FLOPs requirements across algorithms. DKP emerges as the most efficient method in architectures where hidden layers exceed the output layer in size. In this scenario, computing the intermediate error term $\delta_\ell$ only requires multiplying the output error $\delta_L \in \mathbb{R}^{d_{L}}$ by the random matrix $\Psi_\ell \in \mathbb{R}^{d_\ell \times d_{L}}$, which entails fewer MAC operations than BP. In contrast, BP requires multiplying $\Theta_{\ell+1}^\top \in \mathbb{R}^{d_{\ell+1} \times d_{\ell+2}}$ with the higher-layer error $\delta_{\ell+2} \in \mathbb{R}^{d_{\ell+2}}$, typically with $d_L < d_\ell$ for all $\ell \in \{0, \dots, L-1\}$. BP is the second most efficient algorithm overall, followed in order by DKP-PC, PC, and iPC. The logarithmic scale of the plot highlights the growth in computational complexity for PC and iPC as depth increases, since both the minimal number of inference-phase steps and the number of multiple matrix–vector multiplications per inference-phase step increase with depth. DKP-PC scales better than PC and iPC as it requires only one inference-phase step to match or surpass their accuracy, achieving nearly an order of magnitude fewer FLOPs, thereby underscoring its efficiency advantage.

\begin{figure}[!t]
\begin{center}
\includegraphics[width=0.9\textwidth]{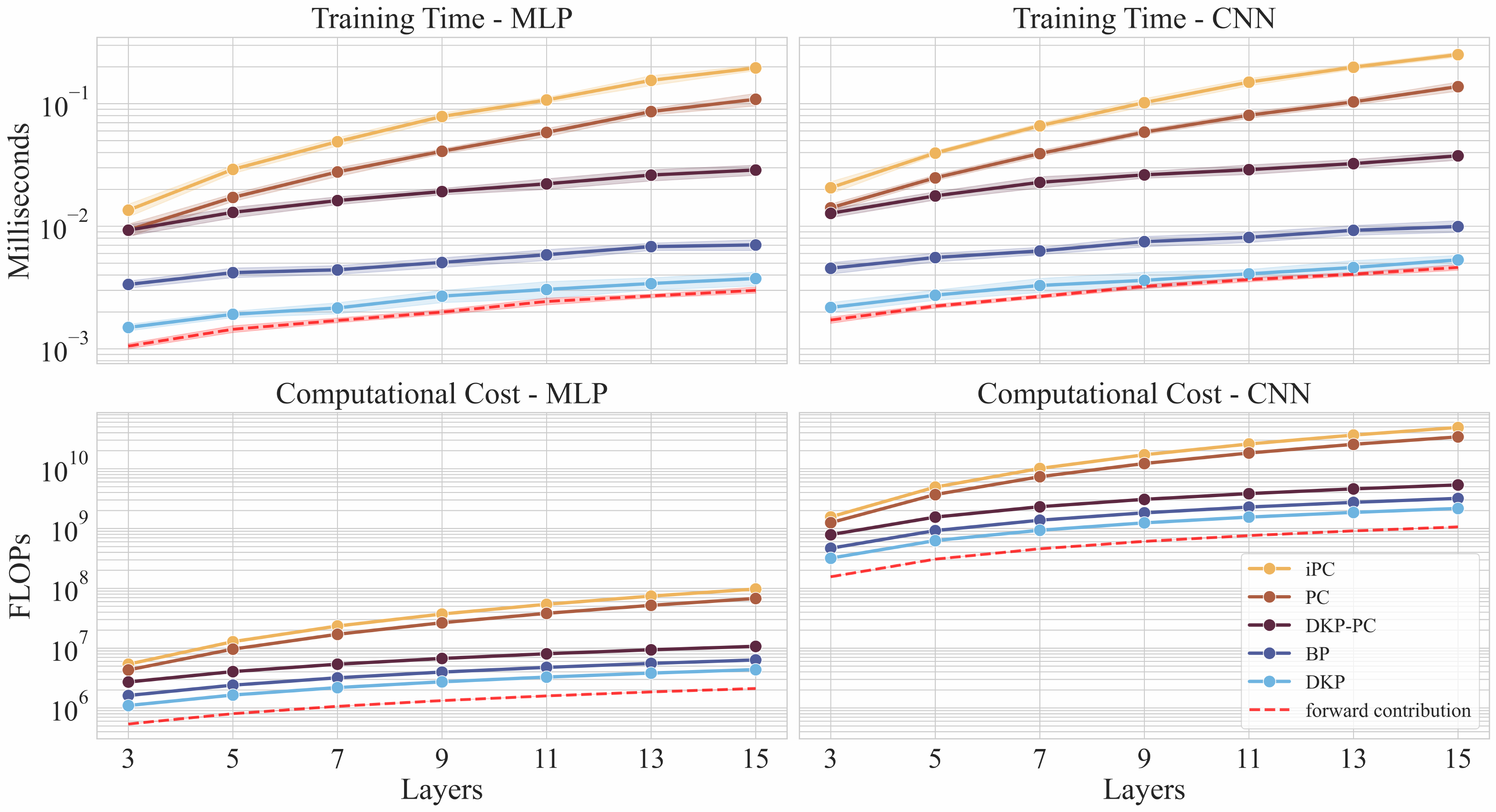}
\end{center}
\caption{The first row compares the training time on a logarithmic scale, measured as the sum of the forward pass and the complete parameter updates, with the contribution of the forward pass illustrated as a red dashed line. Results are averaged over 20 samples. The second row compares the minimum FLOPs requirements, estimated from the core MAC operations. In all plots, for both PC and iPC, the number of inference-phase steps is assumed to equal the network depth. In contrast, for DKP-PC, only a single step is considered, as it has been empirically demonstrated to be enough to achieve comparable performance.}
\label{fig:resource_comparison.png}
\end{figure} 

\subsection{Analysis of parallel execution}
\label{app:parallel}
While a fully-parallel implementation of PC is theoretically possible, it has so far been practically limited by the signal error delay and exponential decay problems \citep{zahid2023predictive, pinchetti2024benchmarking, goemaere2025error} detailed in Section \ref{errordelaydecay}. Here, we discuss how DKP-PC overcomes these limitations, yielding an algorithm capable of achieving lower training latency than BP.

Referring to Algorithm~\ref{alg:dkp-pc} and excluding the forward initialization, which incurs the same computational cost as BP, we first consider the \textit{Direct Feedback Alignment update} (1). Since each forward weight update depends only on the local neural activity and the error signal propagated from the output layer via the corresponding feedback matrix, all updates can be executed in parallel, reducing the time complexity of this phase from $\mathcal{O}(L)$ to $\mathcal{O}(1)$. In the subsequent \textit{Inference phase} (2), which is typically executed over multiple steps ($T \geq L$) in PC, we empirically demonstrate in Table~\ref{tab:acc_comparison} that a single step suffices to achieve accuracy comparable to or exceeding that of standard PC, reducing the time complexity from $\mathcal{O}(T)$ to $\mathcal{O}(1)$. Within this phase, updates of error neurons and neural activities are also parallelizable, as each relies solely on locally available information, resulting in $\mathcal{O}(1)$ time complexity. For completeness, we highlight that synchronization is still required, as error terms must be computed before updating neural activities. Successively, the \textit{Learning phase} (3) and \textit{Direct Kolen-Pollack update} (4) are executed sequentially, though they are independent and can be performed simultaneously. While these phases respectively have to iterate over all $L-1$ forward and backward parameter matrices, their computations are entirely local and hence can be executed in parallel across layers, reducing the time complexity from $\mathcal{O}(L)$ to $\mathcal{O}(1)$.

In summary, DKP-PC consists of four phases, each theoretically fully parallelizable with time complexity $\mathcal{O}(1)$. Each phase blocks the next, except for the final two, which may run concurrently. Consequently, DKP-PC’s time complexity does not grow with network depth, unlike BP, whose time complexity scales linearly as $\mathcal{O}(L)$. For sufficiently deep networks, we claim that the overall training time of DKP-PC will be significantly lower than that of BP. In practice, however, this advantage is challenged by existing hardware, which is heavily optimized for BP, and by the synchronization overhead inherent in software-based parallelization. These limitations could nevertheless be overcome by custom hardware accelerators designed to fully exploit DKP-PC’s parallelizability.
%%%%%%%%%%%%%%%%%%%%%%%%%%%%%%%%%%%%%%%%%%%%%%%%%%%%%%%%%%%%%%%%%%%%%%%%%%%%%%%
%%%%%%%%%%%%%%%%%%%%%%%%%%%%%%%%%%%%%%%%%%%%%%%%%%%%%%%%%%%%%%%%%%%%%%%%%%%%%%%

\end{document}